\documentclass{article}

\PassOptionsToPackage{numbers, compress}{natbib}
\usepackage[dvipsnames]{xcolor}

\usepackage[utf8]{inputenc} %
\usepackage[T1]{fontenc}    %
\usepackage[allcolors=NavyBlue,colorlinks=true,backref=page]{hyperref}       %
\usepackage{url}            %
\usepackage{booktabs}       %
\usepackage{amsfonts}       %
\usepackage{nicefrac}       %
\usepackage{microtype}      %
\usepackage{xspace}
\usepackage{subcaption}

\usepackage{wrapfig}
\usepackage{flushend}

\definecolor{darkblue}{RGB}{0, 0, 139}

\hypersetup{
    colorlinks=true,
    linkcolor=black,
    citecolor=darkblue, %
    urlcolor=darkblue
}

\usepackage{algorithm}
\usepackage[noend]{algpseudocode}

\usepackage[most]{tcolorbox}
\definecolor{colorcomment}{RGB}{160, 190, 210}%
\makeatletter
\algnewcommand{\LineComment}[1]{\Statex \hskip\ALG@thistlm \(\triangleright\) 
{\color{colorcomment}#1}}
\makeatother

\makeatletter
\algnewcommand{\IndentLineComment}[1]{\Statex \hskip\ALG@tlm \(\triangleright\) {\color{colorcomment}#1}}
\makeatother

\usepackage[accepted]{icml2023}

\usepackage{cite}
\usepackage{comment}

\usepackage{amsfonts}
\usepackage{makecell}
\usepackage{adjustbox}
\usepackage{multicol}

\usepackage{amsmath}
\usepackage{amssymb}
\usepackage{mathtools}
\usepackage{amsthm}
\usepackage{bbm}

\usepackage[textsize=footnotesize,textwidth=2.0cm]{todonotes}

\usepackage{xspace}

\graphicspath{{../}}

\usepackage{thmtools} 
\usepackage{thm-restate}

\newcommand\loss{\ell}

\newcommand\Policies{\ensuremath{\Pi}}
\newcommand\policy{\ensuremath{\pi}}

\newcommand\state{s}

\newcommand\stateDist{d}

\newcommand\transDynamics{\mathcal{P}}
\newcommand\reward{r}
\newcommand\RFunc{r}

\newcommand\VFunc{\ensuremath{\ensuremath{V}}}
\newcommand\Func{\ensuremath{\ensuremath{f}}}
\newcommand\QFunc{\ensuremath{\ensuremath{Q}}}
\newcommand\Adv{{A}}
\newcommand\AFunc{{A}}

\newcommand\discFactor{\gamma}

\newcommand\ONum{\ensuremath{K}}

\newcommand\stateSpace{\ensuremath{\mathcal{S}}}
\newcommand\action{\ensuremath{a}}

\newcommand\actionSpace{\ensuremath{\mathcal{A}}}
\newcommand\trajectory{\ensuremath{\mathcal{\tau}}}
\newcommand\horizon{\ensuremath{H}}
\newcommand\ENum{\ensuremath{N}} %

\newcommand\threshold{\ensuremath{{\Gamma_{\state}}}}

\newcommand\dataset{\ensuremath{\mathcal{D}}}

\newcommand\RewardFunc{\ensuremath{r}}

\newcommand\MDP{\ensuremath{\mathcal{M}}}
\newcommand\var{\ensuremath{{v}}}

\renewcommand{\Pr}[1]{\ensuremath{\mathbb{P}\left[#1\right] }}

\newcommand{\expct}[1]{\mathbb{E}\left[#1\right]}
\newcommand{\expctover}[2]{\mathbb{E}_{#1}\!\left[#2\right]}

\def \argmax {\mathop{\rm arg\,max}}

\newcommand{\algname}{\textsc{MAPS}\xspace}
\newcommand{\lopsaps}{\textsc{MAPS}\xspace}

\newcommand{\lopsase}{\textsc{MAPS-SE}\xspace}

\newcommand{\mamba}{\textsc{MAMBA}\xspace}

\newif\iffinal

\iffinal
    \newcommand{\fix}[1]{}
    \newcommand{\YC}[1]{}
    \newcommand{\YCinline}[1]{}
    \newcommand{\XLinline}[1]{}
    \newcommand{\MW}[1]{}
    \newcommand{\MWinline}[1]{}
    \newcommand{\TY}[1]{}
    \newcommand{\TYinline}[1]{}
    \newcommand{\note}[1]{}
    \newcommand{\pref}[1]{}
\else
    \newcommand{\fix}[1]{{#1}}
    \newcommand{\YC}[1]{\todo[fancyline,color=NavyBlue!40]{YC: #1}\xspace}
    \newcommand{\YCinline}[1]{\textcolor{NavyBlue}{[YC: #1]}}
    \newcommand{\MW}[1]{\todo[fancyline,color=Red!40]{MW: #1}\xspace}
    \newcommand{\MWinline}[1]{\textcolor{Red}{[MW: #1]}}
    
    \newcommand{\XLinline}[1]{\textcolor{Maroon}{[XL: #1]}}
    \newcommand{\TYinline}[1]{\textcolor{Green}{[TY: #1]}}
    \newcommand{\TY}[1]{\todo[fancyline,color=Green!40]{TY: #1}\xspace}
    
    \newcommand{\note}[1]{{\color{purple}[XL: #1]}}
    
    \newcommand{\pref}[1]{{\color{blue}(\ref{#1})}}
\fi

\newcommand{\tabref}[1]{Table~\ref{#1}}
\newcommand{\figref}[1]{Fig.~\ref{#1}}

\newcommand{\appref}[1]{Appendix~\ref{#1}}
\newcommand{\thmref}[1]{Theorem~\ref{#1}}

\newcommand{\lemref}[1]{Lemma~\ref{#1}}

\newcommand{\algoref}[1]{Algorithm~\ref{#1}}
\newcommand{\linref}[1]{Line~\ref{#1}}

\newcommand{\paren} [1] {\ensuremath{ \left( {#1} \right) }}

\newcommand{\bracket}[1]{\left[#1\right]}

\newcommand{\curlybracket}[1]{\ensuremath{\left\{#1\right\}}}

\theoremstyle{plain}
\newtheorem{theorem}{Theorem}[section]

\newtheorem{lemma}[theorem]{Lemma}

\theoremstyle{definition}

\theoremstyle{remark}

\icmltitlerunning{Active Policy Improvement from Multiple Black-box Oracles}

\begin{document}

\twocolumn[
\icmltitle{Active Policy Improvement from Multiple Black-box Oracles} 

\icmlsetsymbol{equal}{*}

\begin{icmlauthorlist}
\icmlauthor{Xuefeng Liu}{equal,uchicago}
\icmlauthor{Takuma Yoneda}{equal,ttic}
\icmlauthor{Chaoqi Wang}{equal,uchicago}
\icmlauthor{Matthew R.\ Walter}{ttic}
\icmlauthor{Yuxin Chen}{uchicago}
\end{icmlauthorlist}

\icmlaffiliation{uchicago}{Department of Computer Science, University of Chicago, Chicago, IL, USA}
\icmlaffiliation{ttic}{Toyota Technological Institute at Chicago, Chicago, IL, USA}

\icmlcorrespondingauthor{Xuefeng Liu}{xuefeng@uchicago.edu}

\vskip 0.3in
]

\printAffiliationsAndNotice{\icmlEqualContribution}
\begin{abstract}

Reinforcement learning (RL) has made significant strides in various complex domains. However, identifying an effective policy via RL often necessitates extensive exploration. Imitation learning aims to mitigate this issue by using expert demonstrations to guide exploration. 
In real-world scenarios, one often has access to multiple \emph{suboptimal} black-box experts, rather than a single optimal oracle. 
These experts do not universally outperform each other across all states, presenting a challenge in actively deciding which oracle to use and in which state. 
{We introduce %
\lopsaps and \lopsase, a class of policy improvement algorithms that perform imitation learning from multiple suboptimal oracles. In particular, \lopsaps actively
selects which of the oracles to imitate and improve their value function estimates, and \lopsase additionally leverages an active state exploration criterion to determine which states one should explore.}
We provide a comprehensive theoretical analysis 
and demonstrate that %
\lopsaps and \lopsase enjoy sample efficiency advantage over the state-of-the-art %
policy improvement algorithms. Empirical results show that \lopsase significantly accelerates policy optimization via state-wise imitation learning from multiple oracles across a broad spectrum of
control tasks in the DeepMind Control Suite. 
\end{abstract}

\section{Introduction}

Reinforcement learning (RL) has achieved exceptional performance in many domains, such as robotics~\citep{kober2011reinforcement,kober2013reinforcement}, video games~\citep{mnih2013playing}, and Go~\citep{silver2017mastering}. However, RL tends to be highly sample inefficient in high-dimensional environments due to the need for exploration~\citep{sutton2018reinforcement}.
The sample complexity of RL limits its application to many real-world domains for which interactions with the environment can be costly.
To address this challenge, imitation learning (IL) has emerged as a promising alternative. IL improves the sample efficiency of RL by training a policy to imitate the actions of an expert policy, which is typically assumed to be optimal or near-optimal. This allows the agent to learn from expert demonstrations, reducing the need for costly trial-and-error exploration.

Previous works such as behavioral cloning~\citep{pomerleau1988alvinn}, DAgger~\citep{ross2011reduction}, and AggreVaTe(D)~\citep{ross2014reinforcement, sun2017deeply} demonstrate the effectiveness of IL with a single, near-optimal oracle.
In real-world settings, however, access to an optimal oracle\footnote{We use the terms ``expert'' and ``oracle'' interchangeably.} may be prohibitively expensive or they may simply be unavailable. Instead, it is often the case that the learner has access to \emph{multiple} suboptimal oracles~\citep{cheng2020policy} that are presented to the learner as black boxes, such as in the case of autonomous driving~\citep{lee2020mixgail}, 
robotics~\citep{yang2020multi}, and 
medical diagnosis~\citep{le2023learning}.
A naive approach to IL in these settings---choosing one of the oracles to imitate---can result in suboptimal performance, particularly when the relative utility of the oracles differs according to the state.

These considerations emphasize the importance of attaining sample efficiency when learning from black-box oracles by harnessing their state-wise expertise. Although practically relevant, this problem remains largely unexplored. Inspired by prior works~\citep{cheng2020policy,liu2022cost,liucontextual}, 
{we propose %
\lopsase~(\textbf{M}ax-aggregation \textbf{A}ctive \textbf{P}olicy \textbf{S}election with Active \textbf{S}tate \textbf{E}xploration), an algorithm that actively learns from multiple suboptimal oracles by exploiting their complementary utilities to train a policy that aims to surpass each oracle in a sample-efficient manner.
\lopsase
actively selects the oracle to roll out and uses the resulting trajectory to improve its value function estimate.
Furthermore, %
it is integrated with an active state exploration criterion that actively selects a state to explore based on the uncertainty of its state value.}

{To offer a better understanding of the proposed algorithm, we first investigate a special setting that does not include the active state exploration component. We refer to this algorithm as \lopsaps. \lopsaps generalizes MAMBA~\citep{cheng2020policy}---the current state-of-the-art (SOTA) approach to learning from multiple oracles---by a novel utilization of the active policy selection component. We then prove improvements of \lopsaps over MAMBA in terms of the sample complexity. }
Additionally, we pinpoint issues with MAMBA, such as the uncontrolled switching of roll-in (learner policy) and roll-out (expert policy)~(RIRO) and approximation errors of the gradient estimates. {We then provide a theoretical analysis that demonstrates the advantages of the active state exploration component in \lopsase.}%

Lastly, we conduct extensive experiments on the DeepMind Control Suite benchmark that compare \algname with MAMBA~\citep{cheng2020policy}, PPO~\citep{schulman2017proximal} with GAE~\citep{schulman2015high}, and the best oracle from the oracle set. We empirically show that \algname outperforms the current state-of-the-art (MAMBA). We present an analysis of these performance gains as well as the sample efficiency of our algorithm, which aligns with our theory. We then evaluate our proposed \algname-SE algorithm and demonstrate that it provides performance gains over \algname through various experiments. %

\section{Related Work}\label{sec:related}

In this section, we provide a review of existing literature on %
model selection and imitation learning from one or multiple oracles. We refer the reader to Table~\ref{table:alg_characteristics} of Appendix~\ref{app:problem_setup} for a detailed comparison with prior art. %

\subsection{Policy / Model Selection}

Policy selection, commonly referred to as \emph{model selection} \citep{yang2022offline}, {considers the problem of selecting from or ranking a given set of policies} and arises in various aspects of reinforcement learning. These include the selection of a state abstraction~\citep{jiang2017theory, jiang2015abstraction}, the choice of a learning algorithm, and the process of feature selection~\citep{foster2019model, pacchiano2020model, liucontextual}.
The significance of policy selection has garnered increased attention due to its practical implications~\citep{paine2020hyperparameter, fu2021benchmarks}. 

Recent advancements in this field include active
offline policy selection~(A-OPS)~\citep{konyushova2021active}, which leverages policy similarities to improve value predictions and emphasizes an active setting in which a small number of environment evaluations can enhance the prediction of the optimal policy. Existing policy selection methods have several limitations, such as assuming knowledge of an optimal policy that can be queried for each state, the inability to incorporate a learner policy as part of the selection set, being restricted to a state-less online learning setting, or poor sample efficiency. In this work, we aim to overcome these limitations by actively approximating the value function of multiple blackbox oracles and learning a state-dependent learner policy in a manner that achieves a better sample efficiency than 
the state-of-the-art method.

\subsection{Active / Interactive Imitation Learning}

Offline imitation learning (IL) methods, such as behavioral
cloning, require an offline dataset of trajectories collected from one or more experts, which can lead to cascading errors in the learner
policy. Interactive IL methods, such as DAgger~\citep{ross2011reduction} and AggreVaTe~\citep{ross2014reinforcement}, assume that the learner can actively request a demonstration starting from the current state. With DAgger, the learner aims to imitate the oracle, regardless of its quality. AggreVaTe and its policy gradient variant AggreVaTeD~\citep{sun2017deeply} improve upon DAgger by incorporating the cost-to-go into policy training to prevent the learner from blindly following potentially unreasonable actions suggested by the oracle. However, these interactive IL approaches do not consider the cost of querying the oracle. Active imitation learning methods, on the other hand, reason over the utility of requesting a demonstration~\citep{shon2007active, judah2012active}. Among them, LEAQI~\citep{brantley2020active} uses a heuristic algorithm to actively decide when not to query experts to reduce interaction costs, but it focuses on a single-oracle setting. Our work lies at the intersection of active/interactive IL and RL, allowing our method to handle both single- and multiple-oracle settings, with a focus on efficiently learning from multiple oracles.

\subsection{Learning from Multiple Oracles}

Several methods, including EXP3 and EXP4~\citep{auer2002nonstochastic}, EXP4.P~\citep{beygelzimer2011contextual}, and Hedge~\citep{freund1997decision}, frame the problem of learning from multiple oracles as a contextual bandit or online learning problem. 
{Similarly, CAMS~\citep{liu2022cost,liucontextual} considers active learning from multiple experts, but only in an online learning setting {(with full observations of the oracles' losses and no state transitions)}.} However, these methods cannot handle sequential decision-making problems such as Markov decision processes (MDPs) due to their inability to incorporate state information. 
In RL and IL, ILEED~\citep{ilbelaiev} differentiates between oracles based on their expertise at each state, but is limited to pure offline IL settings. MAMBA~\citep{cheng2020policy} considers imitation learning from multiple experts, but it selects an oracle to query at random, compromising its sample efficiency.%
In contrast, our work improves sample efficiency in a multi-oracle setting while reasoning over their state-dependent relative performance by implementing active policy selection and active state exploration.

\section{Preliminaries}

In this work, we focus on a finite-horizon Markov decision process (MDP) denoted as $\MDP_0=\langle\stateSpace,\actionSpace,\transDynamics,\RFunc, H\rangle$, where $\stateSpace$ represents the state space, $\actionSpace$ represents the action space, $\transDynamics: \stateSpace \times \actionSpace \rightarrow \Delta(\stateSpace)$ denotes the unknown state transition function with $\Delta(\stateSpace)$ being the set of distributions over set $\stateSpace$, $\RFunc: \stateSpace \times \actionSpace \rightarrow [0,1]$ denotes the unknown reward function, and $H$ represents the horizon of the episode. The policy $\policy: \stateSpace \rightarrow  \Delta(\actionSpace)$ maps the current state to a distribution over actions. We define the set of $\ONum$  oracles as $\Policies=\{\policy^k\}_{k=1}^{\ONum}$ \fix{and total number of episodes as $\ENum$}. 
\fix{We denote $\doteq$ as definition}.
For a given function $f:\stateSpace\rightarrow \mathbb{R}$, we define the generalized Q-function with respect to $f$ as:
\begin{equation*}
    \QFunc^{f}\paren{\state,\action} \doteq \reward\paren{\state,\action}+\expctover{\state'\sim \transDynamics \vert \state,\action}{\Func\paren{\state'}}.
\end{equation*}
When $f(\state)$ is the value function of some policy $\pi$, then the above generalized form recovers the Q-function of the policy $\pi$, i.e., $Q^\pi(\state,\action)$.
Let $\stateDist_{\policy}^t \in \Delta(\stateSpace)$ represent the distribution over states at time $t$ under policy $\pi$ given an initial state distribution $\stateDist_0 \in \Delta(\stateSpace)$. The state visitation distribution under $\pi$ can then be written as $\stateDist^{\policy}\fix{\doteq} \frac{1}{H}\sum_{t=0}^{H-1} \stateDist_t^{\policy}$. 
The value function of the policy $\pi$ under $d_0$ is then
\begin{align*}
    \begin{split}
        \VFunc^{\policy}\paren{\stateDist_0} &\fix{\doteq} \expctover{\state_0\sim \stateDist_0}{\VFunc^{\policy}\paren{\state}}\\
        &\fix{\doteq} \expctover{\state_0 \sim \stateDist_0}{\expctover{\trajectory_0\sim \rho^{\policy} \vert \state_0}{\sum_{t=0}^{H-1}\reward\paren{\state_t,\action_t}}},   
    \end{split}
\end{align*}
where $\rho^\pi(\tau_t \vert s_t)$ is the distribution over trajectories \mbox{$\tau_t = \{\state_t, \action_t, \ldots, \state_{\horizon-1}, \action_{\horizon-1}\}$}
 under policy $\pi$.
The goal is to find a policy $\policy$ that maximizes the \horizon-step
return with respect to the initial state distribution $\stateDist_0$. The associated advantage function is expressed as %
\begin{align*}
    \AFunc^{\Func}\paren{\state,\action} &\fix{\doteq}  \QFunc^{\Func}\paren{\state,\action}-\Func\paren{\state}\\
    &\fix{\doteq} \RFunc\paren{\state,\action}+\expctover{\state'\sim\transDynamics \vert \state,\action}{\Func\paren{\state'}}-\Func\paren{\state}.    
\end{align*}

\subsection{Algorithms for Learning from Multiple Oracles}
We consider a setting in which an agent has access to a set of black-box oracles $\Policies=\curlybracket{\policy^k}_{k\in\bracket{\ONum}}$ and propose several different approaches to learning from this set.

\textbf{Single-best oracle} $\policy^\star$: The most fundamental baseline is the single-best oracle $\policy^\star$, characterized by its hindsight-optimal performance, i.e., $\pi^\star \coloneqq \argmax_{\pi \in \Pi} V^{\pi}(d_0)$. This baseline is clearly not sufficient to demonstrate the superiority of the algorithm, as it does not take into account state-wise optimality of different oracles. %

\textbf{Max-following} $\policy^{\bullet}$: 
The choice of the optimal oracle varies according to the state. %
We express this \emph{state-wise expertise} by each oracle's value at the state $\VFunc^{k}(\state),\; k \in {1, \dots, \ONum}$~(we use $\VFunc^{k}(\state)$ to denote $\VFunc^{\pi^k}(\state)$ for simplicity). The \emph{max-following} baseline then selects the best-performing oracle independently at each state according to its value function, resulting in the \emph{max-following} policy $\policy^{\bullet}$
\begin{equation} \label{eq:max-follow}
    \policy^{\bullet}\paren{\action \mid \state} \doteq \policy^{k^*}\paren{\action \mid \state}, \;\; k^* \doteq \argmax_{k\in\bracket{\ONum}} \; \VFunc^{k}\paren{\state}.
\end{equation}
We can interpret \emph{max-following} policy $\policy^{\bullet}$ as a greedy policy that follows the best oracle in any state.

\textbf{Max-aggregation} $\policy^\text{max}$:  In this work, we use \emph{max-aggregation}~\citep{cheng2020policy} as our benchmark that \fix{looks one step ahead}
based on the \emph{max-following} policy $\policy^{\bullet}$. We denote a natural value baseline $\Func^\text{max}(\state_t)$ for studying imitation learning with multiple oracles as
\begin{equation}\label{eq:fmax}
    \Func^\text{max}\paren{\state} \doteq \max_{k\in \bracket{\ONum}}\VFunc^{k}\paren{\state}.
\end{equation}
We then define the \emph{max-aggregation} policy as
\begin{align}\label{eq:maxagg}
     \policy^\text{max}\paren{\action \mid \state} \doteq \delta_{\action=\action^*}, \;\action^* = \argmax_{\action\in\actionSpace}\; 
     \AFunc^{f^\textrm{max}}\paren{\state,\action},  
\end{align}
where $\delta$ is Dirac delta distribution.

When the oracle set $\Policies$ contains only a single oracle, denoted as $\policy^e$, one solution is to perform one-step policy improvement from $\policy^e$. We define the resulting policy, $\policy^e_{+}$, as follows:%
\begin{equation}
    \policy^e_{+}=\argmax_{\action\in\actionSpace}\; \RFunc\paren{\state,\action} + \expctover{\state'\sim \transDynamics \vert \state,\action}{\VFunc^{\policy^e}\paren{\state'}}.
\end{equation}
Since $\VFunc^{\policy^e_{+}}(\state) \geq \VFunc^{\policy^e}(\state)$ is guaranteed, $\policy^e_{+}$ is uniformly better than $\policy^e$ in all states. For a single oracle, $\policy^{\bullet}$%
~reduces to $\policy^e$, and $\policy^\text{max}$%
~reduces to $\policy^e_{+}$. Consequently, $\policy^\text{max}$ performs better than $\policy^{\bullet}$. In the multi-oracle case, $\policy^\text{max}$ and $\policy^{\bullet}$ are generally not directly comparable, except when one of the oracles is uniformly better than all others.
In this scenario, $\policy^{\bullet}$ reduces to $\policy^e$ and $\policy^\text{max}$ reduces to $\policy^e_{+}$. Therefore, $\policy^\text{max}$ still outperforms $\policy^{\bullet}$.

To perform online imitation learning from the max-aggregation policy, $\Func^\text{max}$ is crucial for learning the state-wise expertise from multiple oracles, as shown in Equation~\ref{eq:maxagg}. However, $\Func^\text{max}$%
~requires knowledge of  each oracle's value function. In the episodic interactive IL setting, the oracles are provided in a black-box manner, i.e., without access to their value functions. To address this issue, we follow previous work \citep{ross2010efficient,ross2014reinforcement,sun2017deeply} and reduce IL to an online learning problem.

Since the MDP transition and reward models are unknown, we regard $\stateDist^{\policy_n}$ as the adversary~(i.e., $\stateDist^{\policy_n}$ could be an arbitrary distribution) in online learning, where $\pi_n$ is the policy used for round $n$. Consequently, we define the $n$-th round online imitation learning loss as follows:
\begin{equation}\label{eqn:il-loss}
    \loss_n^{\textnormal{IL}}\paren{\policy} \doteq -\horizon \expctover{\state\sim\stateDist^{\policy_n}}{\AFunc^{f^\textrm{max}}\paren{\state,\policy}}.    
\end{equation}

In this work, we adapt the online loss $\loss_n(\policy;\lambda)$ of \citet{cheng2020policy} to balance the effect of reinforcement leanring~(i.e., explore the environment using the learner's policy only) and imitation learning~(i.e., imitating the $\policy^{\max}$ policy), resulting the following $n$-th round loss,:%
\begin{equation}\label{eq:loss}
    \begin{split}
        \loss_n\paren{\policy;\lambda} \doteq &\underbrace{-(1-\lambda)\horizon \expctover{\state\sim\stateDist^{{\policy_n}}}{\AFunc^{f^\textrm{max},\policy}_{\lambda}\paren{\state,\policy}}}_{\textnormal{Imitation Learning Loss~(Eqn.~\ref{eqn:il-loss})}}\\
        &\underbrace{-\lambda \expctover{\state\sim\stateDist_0}{\AFunc_{\lambda}^{f^\textrm{max},\policy}\paren{\state,\policy}}}_{\textnormal{Reinforcement Learning Loss}},
    \end{split}
\end{equation}
where $\Adv^{f^\textrm{max},\policy}_{\lambda}(\state,\action)$ is a $\lambda$-weighted advantage:
\begin{equation}\label{eq:lambda_weight_advantage}
    \Adv^{f^\textrm{max},\policy}_{\lambda}\paren{\state,\action} \doteq \paren{1-\lambda}\sum_{i=0}^{\infty}\lambda^{i}\Adv_{\paren{i}}^{f^\textrm{max}, \policy}\paren{\state,\action},
\end{equation}
which combines various $i$-step advantages:
\begin{equation}\label{eq:i_step_advantage}
    \begin{split}
        \Adv^{f^\textrm{max},\policy}_{\paren{i}}\paren{\state_t,\action_t} \doteq \;&\mathbb{E}_{\trajectory_t\sim \rho^\pi\paren{\cdot \mid \state_t}}
        {[\reward(\state_t,\action_t)} + \ldots \\
        &+\reward\paren{\state_{t+i},\action_{t+i}}+\Func^\text{max}\paren{\state_{t+i+1}})]\\
        &-\Func^\text{max}\paren{\state_t}. \notag
    \end{split}
\end{equation}
Consider the scenario in which we repeatedly roll-out the $k$-th oracle starting at state $s_t$.
With $N_k(s_t)$ trajectories $\trajectory_{1,k},\trajectory_{2,k},\dots,\trajectory_{\ENum_k,k}$, we compute the return estimate for state $s_t$ as the average return achieved for the trajectories,
\begin{equation}\label{eq:mu_v}
\!\!\!\!\hat{\VFunc}^{k}\paren{\state_t}\fix{\doteq} \hat{\VFunc}^{\policy_k}\paren{\state_t}\fix{\doteq} \frac{1}{\ENum_k\paren{\state_t}}\sum_{i=1}^{\ENum_k\paren{\state_t}}\sum_{j}^{\horizon}\lambda^j\RewardFunc\paren{\state_j,\action_j}, 
\end{equation}
where $\ENum_{k}(\state_t)$ is the number of trajectories starting from the initial state $\state_t$ collected by the $k$-th oracle.

\subsection{Estimator for the Policy Gradient}

We define the empirical estimate of the $\loss_n\paren{\policy,\lambda}$ gradient as%
\begin{align}\label{eq:gradient}
    &\nabla\hat{\loss}_n\paren{\policy_n;\lambda}= \\
    &-\horizon %
    \expctover{\state\sim\stateDist^{\policy_n},\action\sim\policy_n(\cdot \vert \state)}{\nabla\log\policy_n\paren{\action|\state}\AFunc^{\hat{f}^\textrm{max},\policy_n}_{\lambda}\paren{\state,\action}} \notag, 
\end{align}
where the partial derivative is taken with respect to the parameters of the policy, denoted as $\pi_n$. Since the true value function of each oracle is unknown, we use a separate function approximator $\hat{V}^k(\cdot)$ to represent the value function of each oracle $k$. The \fix{approximation error} then affects the estimation of $f^{\max}(\cdot)$, which is essential for computing the policy gradient in Equation~\ref{eq:gradient}. Therefore, the learning speed, which refers to how quickly the error in estimating $f^{\max}(\cdot)$ decreases, plays a crucial role in determining the sample efficiency. In the following sections, we discuss the limitations of the existing state-of-the-art.

\textbf{Limitations of the prior state-of-the-art:} 
A limitation of MAMBA~\citep{cheng2020policy} is its high sample complexity. MAMBA  estimates the policy gradient based on $\hat{\Func}^{\textrm{max}}(\state)$ and requires prolonged episodes to identify the optimal oracle for a given state due to its strategy of sampling an oracle uniformly at random. As a result, MAMBA suffers from a large accumulation of the error~(and hence the regret) when identification fails~(See Theorem~\ref{lem:mamba_performance_lowerbound}). Additionally, MAMBA has no control over the \fix{approximation error} of the gradient estimates when selecting which state to roll-out. Our work aims to reduce the \fix{approximation error} of the estimator by actively selecting an oracle and controlling the state-wise uncertainty through active state exploration.

\section{Algorithm}

\begin{figure}
    \centering
    \begin{subfigure}{0.45\textwidth}
    \centering
        \includegraphics[width=1.0\textwidth]{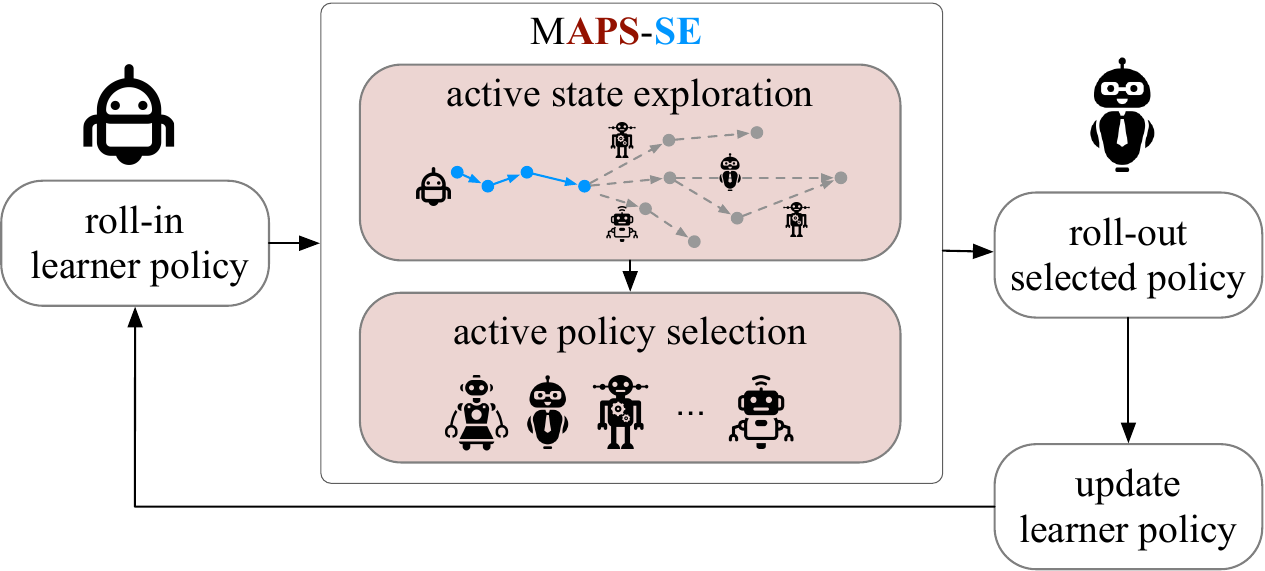}
    \end{subfigure}\hfil
    \caption{Method Overview}\label{method_overview}
\end{figure}

In this section, we focus on addressing the inefficiency of MAMBA with regards to two aspects. Firstly, MAMBA's uniform policy selection fails to effectively balance exploration and exploitation. Secondly, the selection of the state for rolling out the oracle policy plays a critical role in minimizing the sample complexity required to identify the {state-wise} optimal oracle. To address these issues, 
{we introduce \lopsase, illustrated in \figref{method_overview}, a novel policy improvement algorithm that actively selects among multiple oracles to imitate, as well as actively determines which state to explore. 
We separately introduce the active policy selection and active state exploration components in Sections~\ref{sec:alg:aps} and \ref{sec:alg:ase}, respectively. The full algorithm is outlined in \algoref{alg:lops}, with further implementation details provided in \appref{app:algdetails}. 
}

\subsection{Active Policy Selection}\label{sec:alg:aps}

\lopsase reinterprets online imitation learning as an online optimization problem, thereby permitting the utilization of any readily available online learning algorithm. For our problem, the performance~(or regret) crucially relies on the accuracy of the estimated $\hat{f}^{\max}(\cdot)$, which is computed as the maximum over the predicted value functions~($\hat{V}^k(\cdot)$) of every oracle $k \in [K]$. Consequently, this challenge necessitates the formulation of a method that identifies the optimal oracle in a sample-efficient fashion.

{For clarity and simplicity of discussion, we refer to special case of \lopsase with only the \textcolor{Maroon}{\textit{active policy selection}} component as \textcolor{Maroon}{\algname}.} This is equivalent to setting active state exploration \textcolor{NavyBlue}{$\textsc{SE} = \textsc{False}$} in Line~\ref{line:seflag} of Algorithm~\ref{alg:lops}.
\lopsaps incorporates the concept of the upper confidence bound~(UCB) to determine which oracle should be selected during the online learning process. In every state, \lopsaps decides to roll out the oracle with the highest upper confidence bound value, as this oracle has a higher probability of yielding the best performance. The data collected are subsequently used to further improve the approximation of the corresponding value function. Unlike the uniform selection strategy in MAMBA, \lopsaps establishes a more effective balance between exploration and exploitation when rolling out the oracle, thereby achieving better sample efficiency. It should be highlighted that in a single-oracle setting, $\lopsaps$
 {reduces}
to MAMBA. In the case of multiple oracles, our experimental results indicate that by applying reasoning to determine which oracle should be rolled out, \lopsaps consistently surpasses MAMBA in performance. Next, we present the details of \lopsaps for both discrete and continuous state spaces.

\begin{algorithm}[!t]
    \caption{\textbf{M}ax-aggregation \textcolor{Maroon}{\textbf{A}ctive \textbf{P}olicy Selection} with \textcolor{NavyBlue}{Active \textbf{S}tate \textbf{E}xploration} %
    (M\textcolor{Maroon}{APS}-\textcolor{NavyBlue}{SE})
    }\label{alg:lops}
    \begin{algorithmic}[1] 
        \Require {Initial learner policy $\policy_{1}$}, oracle policies $\{\policy^{k}\}_{k\in \bracket{\ONum}}$, initial value functions $\{\hat{V}^k\}_{k\in\bracket{\ONum}}$
        \For{$n=1,2, \ldots, N-1$} 
        \If{\textcolor{NavyBlue}{SE is \textsc{True}}}\label{line:seflag}
            \IndentLineComment{/* active state exploration */}
            \State \textcolor{NavyBlue}{Roll-in policy $\policy_n$ until $\Gamma_{k_\star}\paren{\state_t}$ ${\geq} \threshold$, where $k_{\star}$ and $\Gamma_{k_\star}\paren{\state_{t}}$ are computed via Equation~\ref{eq:kstar}  and \ref{eq:bonus} at each visited state $s_t$.}\label{lin:ase}
        \Else
            \State Roll-in policy $\policy_n$ up to $t_e\sim \text{Uniform}\bracket{\horizon-1}$
        \EndIf
        \IndentLineComment{/* active policy selection */}
        \State \textcolor{Maroon}{{Select $k_{\star}$ via Equation~\ref{eq:kstar}}}.\label{lin:aps}
        \State {Switch to $\policy^{k_{\star}}$ to roll-out and collect data $\mathcal{D}_n$}.
        \State Update the estimate of $\hat{V}^{k_{\star}}(\cdot)$ with $\mathcal{D}_n$.
        \State Roll-in $\policy_n$ for full $\horizon$-horizon to collect data $\mathcal{D}_n'$.
        \State Compute gradient estimator $g_n$ of $\nabla \hat{\ell}_n(\policy_n, \lambda)$ \eqref{eq:gradient} using $\mathcal{D}_n'$. \label{lin:aps:gradestimator}
        \State Update $\policy_n$ to $\policy_{n+1}$ by giving $g_n$  to a first-order online learning algorithm. 
        \EndFor 
    \end{algorithmic}
\end{algorithm}

\textbf{Discrete state space}: When the state space is discrete, we define the best oracle $k_\star$ to select for a given state $\state_{t}$ as
\begin{equation}\label{eq:alg:aps:discrete}
    k_{\star} = \argmax_{k \in \bracket{\ONum}} \hat{V}^k(s_{t}) +\sqrt{\frac{2\horizon^2\log{\frac{2}{\delta}}}{\ENum_k\paren{\state_{t}}}}, 
\end{equation}
where $\hat{V}^k(s_{t})$, $\ENum_k\paren{\state_t}$ is defined immediately following Equation~\ref{eq:mu_v}, and $\delta$ is a small-valued hyperparameter commonly seen in high-probability bounds. The exploration bonus term  $\sqrt{\frac{2\horizon^2\log{\frac{2}{\delta}}}{\ENum_k\paren{\state_{t}}}}$ is derived from 
\lemref{lem:explore_bonus}, which captures the standard deviation of the estimated value as well as our confidence over the estimation.

\textbf{Continuous state space}: In the case of a continuous state space, we employ an ensemble of prediction models to approximate the mean value $\hat{V}^{k}(s_{t})$ and the bonus representing uncertainty, denoted as $\sigma_k\paren{\state_t}$. For each oracle policy $\policy^k\in \Policies$, we initiate a set of $n$ independent value prediction networks with random values, and proceed to train them using random samples obtained from the oracle's trajectory buffer.
We formulate the UCB term and estimate the optimal oracle policy $\policy^{k_{\star}}$ using the following expression:
\begin{equation}\label{eq:alg:aps:continuous}
    k_{\star} {=} \argmax_{k \in \bracket{\ONum}} \hat{V}^k(s_{t}) + \sigma_k\paren{\state_t}. 
\end{equation}
To summarize, we determine the best oracle as
\begin{equation}\label{eq:kstar}
   \!\!\!\! {k_\star}=\argmax_{k \in \bracket{\ONum}}
    \begin{cases}
         \hat{V}^k(s_{t}) + \sqrt{\frac{2\horizon^2\log{\frac{2}{\delta}}}{\ENum_k\paren{\state_{t}}}} & \textnormal{discrete} \\
         \hat{V}^k(s_{t}) + \sigma_k\paren{\state_t} &\textnormal{continuous}
    \end{cases}
\end{equation}

\subsection{Active State Exploration}\label{sec:alg:ase}

The second limitation of MAMBA is that \fix{it doesn't reason over which state the exploration should occur.}
\fix{As a result, MAMBA may choose to roll out an oracle policy in states for which it already has good confidence on.}
Therefore, building upon \lopsaps, we propose an \emph{active state exploration} variant of \algname (\lopsase) that decides whether to continue rolling in the current learner policy or switch to the most promising oracle, similar to \lopsaps, based on an uncertainty measure for the current state. In this way, \lopsase aims to actively select the state in which to minimize uncertainty.

\fix{The bias and variance of the gradient estimates decrease when $\Func^\text{max}\paren{\state_t}$ returns the best-performing
oracle}  and the associated uncertainty of the value estimation on state $\state_t$ is minimized. For a specific state $\state_t$, \lopsase determines whether to proceed with the roll-out using the selected oracle policy (Eqn.~\ref{eq:kstar}) or continue using the learner's policy, based on the optimal oracle's uncertainty. 
The means by which we estimate the oracle's uncertainty varies depending on whether the state space is discrete or continuous.%

When the state space is discrete, \lopsase identifies the best oracle $k_{\star}$ for state $\state_t$ according to Equation~\ref{eq:alg:aps:discrete}. 
In continuous state space domains, $\ENum_{k_\star}\paren{\state_t}$ becomes intractable.
In this case, as in Section~\ref{sec:alg:aps}, we use an ensemble of value networks to measure the uncertainty $\Gamma_{k_{\star}}\paren{\state_t}$.
\lopsase then measures the exploration bonus associated with this oracle for state $\state_t$ as $\Gamma_{k_{\star}}\paren{\state_t}$
\begin{equation}\label{eq:bonus}
    \begin{split}
        \Gamma_{k_\star}\paren{\state_t}&=
        \begin{cases}
        \sqrt{\frac{2\horizon^2\log{\frac{2}{\delta}}}{\ENum_{k_\star}\paren{\state_{t}}}} &  \textrm{discrete} \\
        \sigma_{k_\star}\paren{\state_t} &  \textrm{continuous}
        \end{cases}   
    \end{split}
\end{equation}
\lopsase decides whether to roll out the best oracle in state $\state_t$ according to how confident we are in the selection of the best oracle. We define the uncertainty threshold according to \fix{\thmref{thm:ase:threshold}} as
\begin{align}
    \threshold 
    \fix{=} {\alpha\cdot\paren{\sqrt{\frac{{2\horizon^2\log\frac{2}{\delta}}}{{K + \left(\sum_i \frac{1}{\Delta_i^2}\right) \log\left(\frac{K}{\delta}\right)}}}}},
\end{align}
where $\alpha$ is a tunable hyperparameter.  %
If $\Gamma_{k_{\star}}\paren{\state_t} \geq \threshold$, \lopsase rolls out the identified oracle $k_{\star}$ at state $\state_t$. Otherwise, \lopsase transitions to the next state using the learner's policy.
Thus, by applying \lopsase, 
we aim to reduce the $f^\text{max}$ uncertainty of state $\state_t$ under learner's trajectory below the threshold $\threshold$.

\section{Theoretical Analysis}

\begin{figure*}[!t]
    \centering
    \begin{subfigure}{.24\textwidth}
    \centering
        \includegraphics[width=1.0\textwidth]{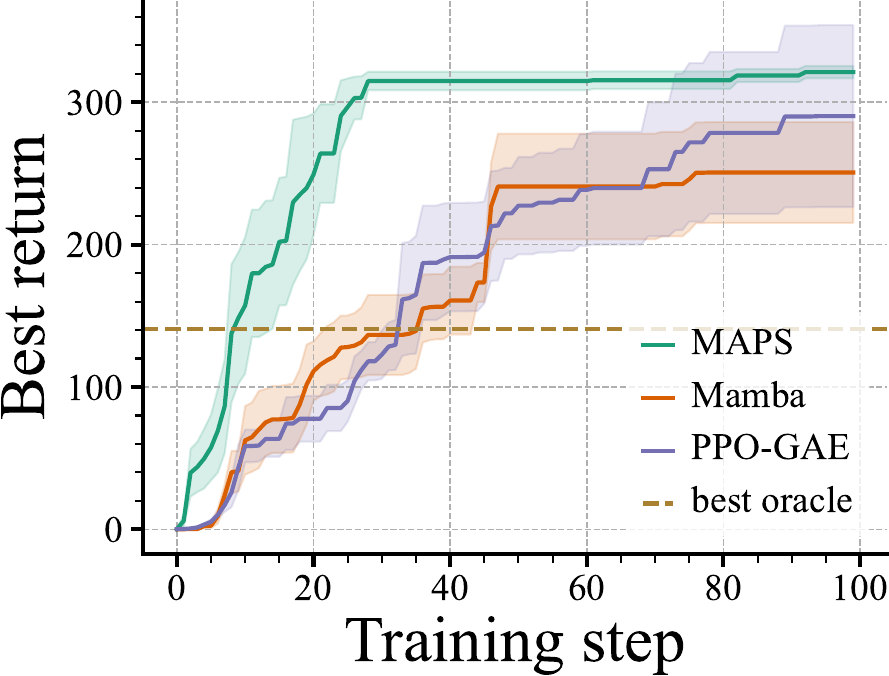}
        \caption{Cheetah-run}
    \end{subfigure}
    \begin{subfigure}{.24\textwidth}
        \centering
        \includegraphics[width=\textwidth]{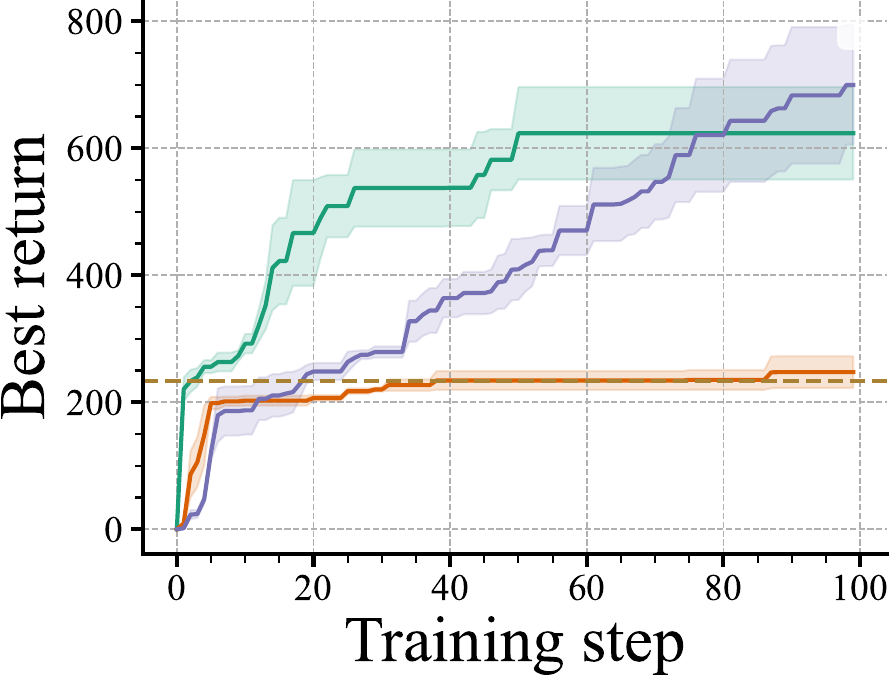}
        \caption{Cartpole-swingup}
    \end{subfigure}\hfil
    \begin{subfigure}{.24\textwidth}
        \centering
        \includegraphics[width=\textwidth]{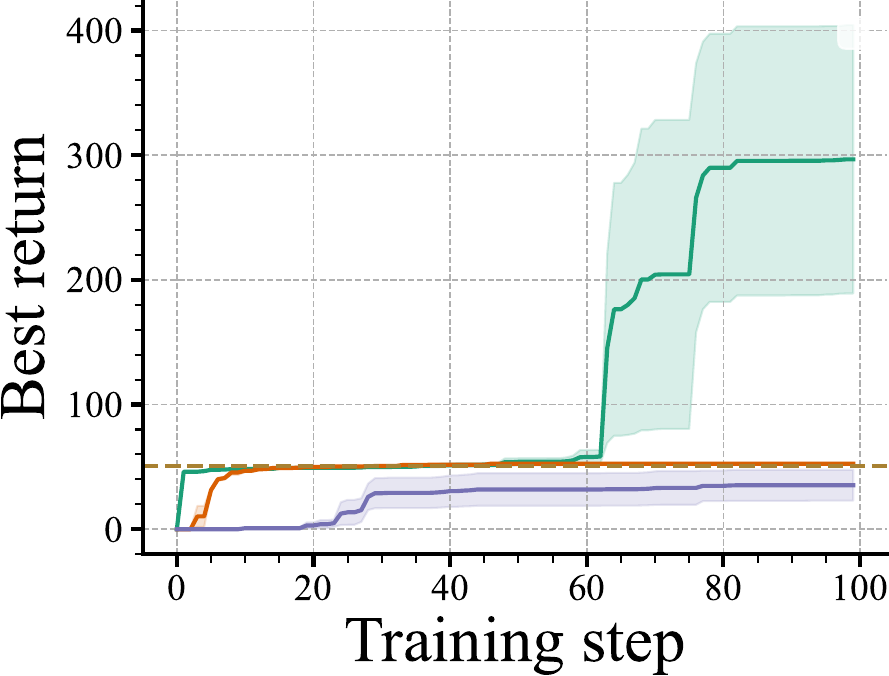}
        \caption{Pendulum-swingup}
    \end{subfigure}\hfil
    \begin{subfigure}{.24\textwidth}
        \centering
        \includegraphics[width=\textwidth]{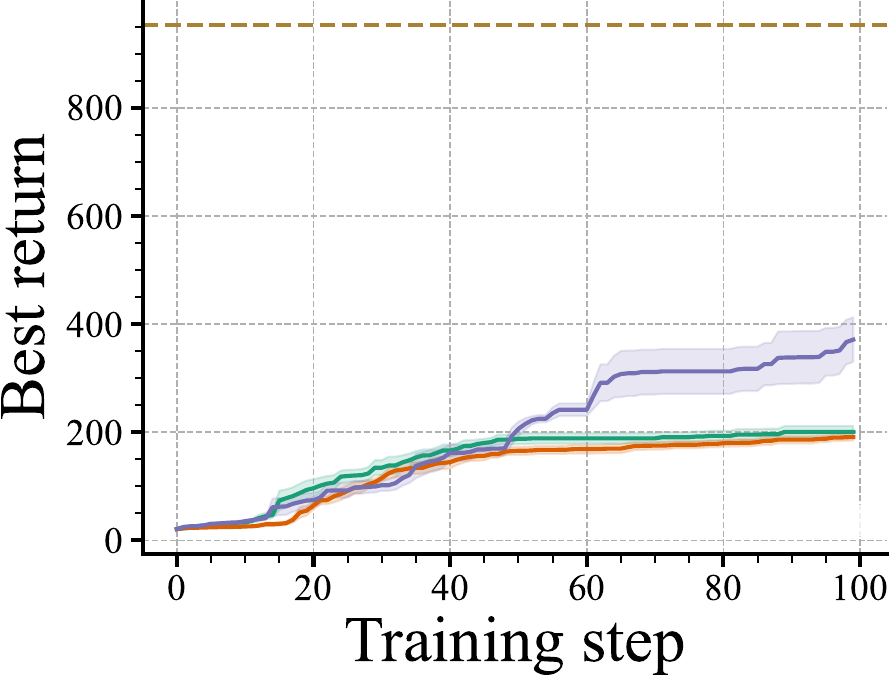}
        \caption{Walker-walk}
    \end{subfigure}\hfil

    \vspace{-0.1cm}
    \caption{Comparing the performance of \algname against the three baselines (MAMBA, PPO-GAE, and the best oracle) across four environments (Cheetah-run, Cartpole-swingup, Pendulum-swingup, and Walker-walk), using the best-return-so-far metric. Each domain includes three oracles, each representing a mixture of policies pretrained via PPO-GAE and SAC.  
    The best oracle is depicted as dotted horizontal lines in the figure. The shaded areas denote the standard error calculated using five random seeds. Except for the Walker-walk environment, \algname surpasses all baselines in every benchmark.}
    \vspace{-0.5cm}
    \label{fig:aps}
\end{figure*}
\subsection{Performance Guarantee of \algname-SE}\label{sec:thoery:ase}

In this section, we present a theoretical analysis of the benefits offered by \algname in both the APS and ASE configurations. For the sake of simplicity, our analysis concentrates on employing the online imitation learning loss $\ell^{\textnormal{IL}}_n(\cdot)$, or its equivalent $\ell_n(\cdot)$ with $\lambda=0$. This setting is the same as that in \citet{cheng2020policy}. {The proofs of the theorems in this section are deferred to the \appref{app:theory}}.

Let $\policy_n$ be the learner’s policy generated in the $n$-th round of online learning in \fix{\algoref{alg:lops}}.
Define $\zeta_N = -\frac{1}{\ENum}\sum_{n=1}^{\ENum}\loss_n^{\textnormal{IL}}\paren{\policy^{\textrm{max}}}$ and let %
\begin{align*}
    \epsilon_{\ENum}\paren{\Policies} &= \frac{1}{\ENum}\paren{\min_{\policy\in \Policies} 
 \sum_{n=1}^{\ENum}\loss_{n}^{\textnormal{IL}}\paren{\policy}-\sum_{n=1}^{\ENum}\loss_n^{\textnormal{IL}}\paren{\policy^\text{max}}},\\
    \text{Regret}_N &= \sum_{n=1}^{\ENum}\loss_{n}^{\textnormal{IL}}\paren{\policy_n}-\min_{\policy\in \Policies} \sum_{n=1}^{\ENum}\loss_n^{\textnormal{IL}}\paren{\policy}.
\end{align*}
Here, $\epsilon_{\ENum}\paren{\Policies}$ captures
the quality of $\Policies$, and $\text{Regret}_N$ characterizes the convergence rate of the online algorithm. 

\citet{cheng2020policy} establish a meta theorem for a class of max-aggregation algorithms based on the gradient estimator provided in Equation~\ref{eq:gradient}. As \algname falls into this category, the following general result also applies to \fix{\algoref{alg:lops}}:

\begin{restatable}[\citet{cheng2020policy}]{thm}{mambaPerformanceLowerBound}\label{lem:mamba_performance_lowerbound} 
Define $\zeta_N$, $\epsilon_{\ENum}\paren{\Policies}$, and $\text{Regret}_N$ as above, where $\text{Regret}_N$ corresponds to the regret of a first-order online learning algorithm based on Equation~\ref{eq:gradient}.  
It holds that
\begin{align*}
\expct{\max_{n\in\bracket{\ENum}}\VFunc^{\policy_n}\paren{\stateDist_0}}&\geq \expctover{\state\sim\stateDist_0}{\max_{k\in[K]}\VFunc^k\paren{\state}} \\
    &+ \expct{\zeta_{\ENum}-\epsilon_{\ENum}\paren{\Policies}-\frac{\text{Regret}_N}{N}},
\end{align*}
where the expectation is over the randomness in feedback and the online algorithm.
\end{restatable}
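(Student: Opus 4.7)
The plan is to combine a performance-difference identity with a standard regret decomposition, both applied pointwise over the $N$ rounds. First I would recall (or quickly verify) the telescoping identity: for any baseline $f:\mathcal{S}\to\mathbb{R}$ and any policy $\pi$,
\begin{equation*}
V^{\pi}(d_0) \;=\; \mathbb{E}_{s\sim d_0}\!\big[f(s)\big] \;+\; H\,\mathbb{E}_{s\sim d^{\pi},\,a\sim\pi}\!\big[A^{f}(s,a)\big],
\end{equation*}
which follows by telescoping the one-step generalized advantages along the $H$-step trajectory distribution of $\pi$ and using $d^{\pi}=\tfrac{1}{H}\sum_{t}d_{t}^{\pi}$. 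Specializing to $f=f^{\max}$ and $\pi=\pi_n$, the right-hand side equals $\mathbb{E}_{s\sim d_0}[\max_{k}V^{k}(s)] \;-\; \ell_n^{\textnormal{IL}}(\pi_n)$, because the roll-in distribution that defines $\ell_n^{\textnormal{IL}}$ in Eqn.~\eqref{eqn:il-loss} is exactly $d^{\pi_n}$ and $f^{\max}(s)=\max_{k}V^{k}(s)$.

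Second, I would use the elementary inequality $\max_{n\in[N]} V^{\pi_n}(d_0) \geq \tfrac{1}{N}\sum_{n=1}^{N} V^{\pi_n}(d_0)$ together with the identity above to obtain, on every sample path,
\begin{equation*}
\max_{n\in[N]} V^{\pi_n}(d_0) \;\geq\; \mathbb{E}_{s\sim d_0}\!\big[\max_{k} V^{k}(s)\big] \;-\; \tfrac{1}{N}\sum_{n=1}^{N} \ell_n^{\textnormal{IL}}(\pi_n).
\end{equation*}
The final step is to unpack the three bookkeeping quantities: by their definitions,
\begin{equation*}
\tfrac{1}{N}\sum_{n=1}^{N}\ell_n^{\textnormal{IL}}(\pi_n) \;=\; \tfrac{\mathrm{Regret}_N}{N} \;+\; \epsilon_N(\Pi) \;-\; \zeta_N,
\end{equation*}
since $\sum_n \ell_n^{\textnormal{IL}}(\pi_n) = \mathrm{Regret}_N + \min_{\pi\in\Pi}\sum_n \ell_n^{\textnormal{IL}}(\pi)$ and $\min_{\pi\in\Pi}\sum_n \ell_n^{\textnormal{IL}}(\pi) = N\epsilon_N(\Pi) + \sum_n \ell_n^{\textnormal{IL}}(\pi^{\max}) = N\epsilon_N(\Pi) - N\zeta_N$. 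Substituting and taking expectations over the randomness of feedback and the online algorithm yields the claim.

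The main conceptual subtlety is not the algebra but the meaning of $\mathrm{Regret}_N$. The online learner only observes the stochastic gradient estimator $g_n$ from Eqn.~\eqref{eq:gradient} rather than $\nabla\ell_n^{\textnormal{IL}}(\pi_n)$ itself; however, because the theorem is phrased in terms of an abstract first-order online learner with regret $\mathrm{Regret}_N$ on the sequence $\{\ell_n^{\textnormal{IL}}\}$, this noise is absorbed into the premise, and one simply invokes the standard fact that an unbiased stochastic first-order method inherits the exact-gradient regret bound in expectation. A small care at the opening step is needed for the terminal horizon when telescoping the advantages, which is handled either by defining $f^{\max}$ to vanish at any terminal state or by folding the terminal contribution into the $V^{k}$'s; this is the standard convention in the max-aggregation framework of \citet{cheng2020policy}.
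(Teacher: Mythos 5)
Your proof is correct and is essentially the standard argument for this result: the paper itself does not reprove Theorem~\ref{lem:mamba_performance_lowerbound} but imports it from \citet{cheng2020policy}, and your reconstruction (performance-difference identity with baseline $f^{\max}$, the bound $\max_n V^{\pi_n}(d_0)\ge \frac{1}{N}\sum_n V^{\pi_n}(d_0)$, and the definitional decomposition of $\frac{1}{N}\sum_n\ell_n^{\textnormal{IL}}(\pi_n)$ into $\mathrm{Regret}_N/N+\epsilon_N(\Pi)-\zeta_N$) is exactly the route taken there. Your two side remarks --- the terminal-state convention needed for the telescoping and the fact that the stochastic gradient estimator is absorbed into the premise on $\mathrm{Regret}_N$ --- correctly identify the only points requiring care.
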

\thmref{lem:mamba_performance_lowerbound} suggests that the performance of the learning algorithm (i.e., measured by %
$\expct{\max_{n\in\bracket{\ENum}}\VFunc^{\policy_n}\paren{\stateDist_0}}$) is directly impacted by the regret $\expct{\text{Regret}_N}$. Therefore, to improve the above lower bound, it suffices to design an online learning algorithm with a better regret bound.

Consider a first-order online algorithm that satisfies \begin{equation}\label{eq:regret-ub}
\mathbb{E}\bracket{\text{Regret}_N}\leq O\paren{\beta\cdot\ENum +\sqrt{\var \ENum}},
\end{equation}
where $\beta$ and $\var$ are the bias and the variance of the gradient estimates, respectively. In the following, we show that \lopsaps and \lopsase can effectively reduce the bias term and variance term
with a smaller number
of oracle calls per state compared to its passive counterpart.

\subsection{Advantage of \lopsaps Over Uniform Policy Sampling}

We consider the discrete-state setting and provide an upper bound on the number of state visitations required by \textrm{\algname}
to identify the best oracle at any given state $s$.

\begin{restatable}[]{thm}{SampleComplexityAPS}\label{thm:APS-sample-complexity}
For any state $s$ and $\ONum$ oracles, the best oracle an be identified with probability at least $1-\delta$ using {\algoref{alg:lops}} %
with active policy selection\footnote{For analysis, we assume that the upper confidence bound of each oracle $i$ is computed by $\widehat{V}_{i}(s)+\sqrt{a/N_i(t)}$ for each iteration $t$, and $0\leq a \leq (25(T-K))/(36(\sum_i \Delta_i^{-2}))$ for simplicity.}  if the number of visitation on this state is
\begin{align}
    T = \mathcal{O}\paren{\ONum + \paren{\sum_i \frac{\horizon^2}{\Delta_i(s)^2} }\log \paren{\frac{\ONum}{\delta}}},   
\end{align}
where $\Delta_i(s) := \max_{j\neq i^{\star}} V_{i^{\star}}(s) - V_{j}(s)$, $i^{\star}=\max_i V_i(s)$ is the suboptimality gap of oracle $i$ at state $s$, and $\horizon$ is the task horizon. 
\end{restatable}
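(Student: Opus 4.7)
The plan is to view the oracle-selection subproblem at a fixed state $s$ as a $K$-armed stochastic best-arm identification problem. Each ``arm pull'' corresponds to rolling out oracle $i$ from state $s$, yielding a Monte-Carlo return estimate bounded in $[0,H]$ with mean $V_i(s)$ (cf.\ Equation~\ref{eq:mu_v}); the selection rule in Equation~\ref{eq:alg:aps:discrete} is then exactly UCB1 with confidence radius of the form $\sqrt{a/N_i(t)}$ for $a = \Theta(H^2\log(K/\delta))$. The claimed bound matches the standard instance-dependent sample complexity of UCB on such a bandit, so the strategy is to port that proof into this setting.

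The first step is to establish a high-probability ``good event'' $\mathcal{E}$ on which the confidence intervals hold simultaneously, namely $|\widehat{V}_i(s)-V_i(s)| \leq \sqrt{a/N_i(t)}$ for every oracle $i\in[K]$ and every round $t$ at which state $s$ is visited. This follows from Hoeffding's inequality (in the form already packaged as Lemma~\ref{lem:explore_bonus}) together with a union bound over the $K$ oracles; to avoid inflating $\log(K/\delta)$ into $\log(KT/\delta)$, I would invoke a peeling/doubling argument over dyadic levels of $N_i(t)$, for which the regularity condition on $a$ in the footnote provides exactly the headroom needed. This yields $\Pr[\mathcal{E}]\geq 1-\delta$.

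The second step is the classical UCB elimination argument on $\mathcal{E}$. Optimism gives $\widehat{V}_{i^\star}(s)+\sqrt{a/N_{i^\star}(t)}\geq V_{i^\star}(s)$. If a suboptimal oracle $i$ is selected at round $t$, its UCB must dominate that of $i^\star$, so $\widehat{V}_i(s)+\sqrt{a/N_i(t)}\geq V_{i^\star}(s)$, and combining with the upper confidence bound $\widehat{V}_i(s)\leq V_i(s)+\sqrt{a/N_i(t)}$ yields
\[
2\sqrt{a/N_i(t)} \;\geq\; V_{i^\star}(s)-V_i(s)\;\geq\;\Delta_i(s),
\]
hence $N_i(t)\leq 4a/\Delta_i(s)^2 = O\!\left(H^2\log(K/\delta)/\Delta_i(s)^2\right)$. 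Once each suboptimal oracle $i$ has been pulled this many times, its UCB lies strictly below $V_{i^\star}(s)$, and therefore below $i^\star$'s UCB, so it is never pulled again; $i^\star$ is then identified as the (unique) arm whose UCB remains above the empirical means of the eliminated arms. Adding the $K$ initial forced pulls needed to seed each $\widehat{V}_i(s)$ to the per-arm caps gives the advertised $T = O\!\left(K + \left(\sum_i H^2/\Delta_i(s)^2\right)\log(K/\delta)\right)$.

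The main obstacle I anticipate is the time-uniform concentration step that produces $\log(K/\delta)$ rather than $\log(KT/\delta)$ in the confidence width: a naive per-round union bound would give the worse scaling and would not close to the stated bound. The peeling argument (or an equivalent anytime Hoeffding inequality) is standard but tedious, and the footnote's inequality $a\leq 25(T-K)/(36\sum_i \Delta_i^{-2})$ is precisely the bookkeeping that ensures the confidence level $a$ and the horizon $T$ are jointly consistent with the $\Delta_i$-dependent per-arm caps derived above. Everything after the good-event construction is a textbook UCB computation.
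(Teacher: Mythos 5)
Your overall strategy is the same as the paper's: treat the per-state oracle choice as a $K$-armed bandit, build a Hoeffding/union-bound good event on which all confidence intervals of width $\sqrt{a/N_i(t)}$ hold, and use the standard UCB argument to cap the number of pulls of each suboptimal oracle at $O(a/\Delta_i(s)^2)$ with $a=\Theta(H^2\log(K/\delta))$. That part of your write-up matches the paper's proof step for step (the paper even proves the cap $N_i(t)\leq \frac{36}{25}a\Delta_i^{-2}+1$ by exactly the comparison of UCB indices you describe). You are, if anything, more careful than the paper about the union bound over rounds: the paper sets $2TK\exp(-a/(50H^2))=\delta$ and then quietly drops the resulting $\log T$ from the final bound, whereas you flag the issue and propose peeling.

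The genuine gap is in your last step, ``$i^\star$ is then identified as the (unique) arm whose UCB remains above the empirical means of the eliminated arms.'' Identification here has to mean that $\widehat{V}_{i^\star}(s)$ is the largest \emph{empirical mean}, because that is what $\hat{f}^{\max}(s)=\max_k \widehat{V}^k(s)$ and the gradient estimator actually consume. Capping the suboptimal arms gives you $\widehat{V}_i(s)\leq \frac{1}{2}\paren{V_{i^\star}(s)+V_i(s)}$ for $i\neq i^\star$, but that is only half of what is needed: if $N_{i^\star}(T)$ is small, the good event still allows $\widehat{V}_{i^\star}(s)$ to undershoot $V_{i^\star}(s)$ by up to $\sqrt{a/N_{i^\star}(T)}$, which can exceed $\Delta_{i^\star}/2$ and hence fall below some $\widehat{V}_i(s)$. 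So you also need a \emph{lower} bound on $N_{i^\star}(T)$. This is exactly what the paper's remaining machinery provides: an inductive lower bound $N_i(t)\geq \frac{4}{25}\min\{a/\Delta_i^2,\frac{25}{36}(N_{i^\star}(t)-1)\}$, followed by the counting argument $N_{i^\star}(T)-1\geq T-K-\frac{36}{25}a\sum_{i\neq i^\star}\Delta_i^{-2}$, which together with the footnote's constraint $a\leq \frac{25}{36}(T-K)/\sum_i\Delta_i^{-2}$ forces $N_{i^\star}(T)-1\geq \frac{36}{25}a\Delta_{i^\star}^{-2}$ and hence $\widehat{V}_{i^\star}(s)\geq \frac{1}{2}\paren{V_{i^\star}(s)+\max_{i\neq i^\star}V_i(s)}$. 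In other words, the footnote's condition on $a$ is not bookkeeping for the time-uniform concentration step, as you suggest; it is the hypothesis that guarantees the best arm itself accumulates enough samples. Without this piece your argument shows only that suboptimal arms stop being pulled, not that the algorithm's $\argmax_k\widehat{V}^k(s)$ returns $i^\star$.
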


{%
Note that the bias term $\beta$ in Equation~\eqref{eq:regret-ub} is caused by two factors: (a) the bias in estimating ${\VFunc}^{k_\star}\paren{\state}$ when $\Func^{\textrm{max}}$ actually selects the optimal oracle at state $\state$, and (b) the bias due to $\Func^{\textrm{max}}$ selecting the suboptimal oracle at state $\state$. The bias from (a) will be eliminated as $N$ grows. According to \thmref{thm:APS-sample-complexity}, the bias introduced by (b) over $N$ episodes is upper bounded by $T\beta^{\textrm{max}}/N$, where $\beta^{\textrm{max}}$ denotes the maximum difference in loss between $\pi^n$ and the optimal oracle at $s$. Suppose that the learner visited $\stateSpace'\subseteq\stateSpace$ states in total, with $|\stateSpace'|\leq \stateSpace$. Then by the union bound, we get with probability $1-|\stateSpace'|\delta$, the biased caused by (b) is 
\begin{align*}
    \mathcal{O}\paren{ \paren{\ONum + \paren{\frac{K\horizon^2}{\min_{i,s}\Delta_{i}(s)^2} }\log \paren{\frac{\ONum}{\delta}} } \frac{|\stateSpace'|\beta^{\textrm{max}}}{N} }.
\end{align*}
}

In contrast, if we replace \linref{lin:aps} of \fix{\algoref{alg:lops}} by uniform sampling, \lopsaps reduces to \mamba, and we have
\begin{restatable}[]{thm}{SampleComplexityMamba}\label{thm:Mamba-sample-complexity}
Under the same conditions as in Theorem~\ref{thm:APS-sample-complexity}, if we adopt the uniform selection strategy, then with probability at least $1-\delta$, the best oracle can be identified if
\begin{align}
    T = \mathcal{O}\paren{ \paren{\sum_i \frac{\ONum \horizon^2}{\Delta_i(s)^2}} \log \paren{\frac{\ONum}{\delta}}}.
\end{align}
\end{restatable}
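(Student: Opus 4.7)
The plan is to follow the standard best-arm identification argument, but with the sample-allocation bound appropriate for uniform sampling. Because each oracle now gets a $1/K$ share of the visits at state $s$, the bottleneck shifts from $\max_i$ (as in the APS bound of Theorem~\ref{thm:APS-sample-complexity}) to a bound that effectively pays a factor of $K$ on every arm, yielding the summation form in the statement.

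First I would set up a high-probability concentration event. Conditional on $N_i(T)$ samples being collected from oracle $i$ at state $s$, each sampled return lies in $[0,\horizon]$, so Hoeffding's inequality gives
\[
\Pr{\bigl|\widehat{V}_i(s) - V_i(s)\bigr| \geq \epsilon_i} \leq 2\exp\!\paren{-\tfrac{2 N_i(T)\,\epsilon_i^2}{\horizon^2}}.
\]
A sufficient condition for correct identification is $|\widehat V_i(s) - V_i(s)| < \Delta_i(s)/2$ for all $i$, since then $\widehat V_{i^\star}(s) \geq V_{i^\star}(s)-\Delta_i(s)/2 \geq V_i(s) + \Delta_i(s)/2 \geq \widehat V_i(s)$ for every $i\neq i^\star$. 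Setting $\epsilon_i = \Delta_i(s)/2$ and taking a union bound over the $\ONum$ oracles, it suffices that
\[
N_i(T) \;\geq\; \frac{8\,\horizon^2}{\Delta_i(s)^2}\,\log\!\paren{\tfrac{4\ONum}{\delta}} \qquad \forall\,i\in[\ONum],
\]
which secures identification with probability at least $1 - \delta/2$.

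Next I would translate this per-arm requirement into a requirement on $T$ under uniform selection. Since each visit selects an oracle uniformly at random and independently, $N_i(T)\sim\text{Binomial}(T,1/\ONum)$, and a multiplicative Chernoff bound yields $N_i(T) \geq T/(2\ONum)$ simultaneously for all $i$ with probability at least $1-\delta/2$, provided $T \geq c\,\ONum\log(\ONum/\delta)$ for an absolute constant $c$. Substituting $N_i(T) \geq T/(2\ONum)$ into the Hoeffding requirement above, it suffices that $T/(2\ONum) \geq (8\horizon^2/\Delta_i(s)^2)\log(4\ONum/\delta)$ for every $i$, i.e., $T \geq \max_i \frac{16\,\ONum\,\horizon^2}{\Delta_i(s)^2}\log\!\paren{4\ONum/\delta}$. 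Using the trivial bound $\max_i x_i \leq \sum_i x_i$ and absorbing the (dominated) Chernoff requirement $T \geq c\,\ONum\log(\ONum/\delta)$ into constants, we obtain the stated rate
\[
T \;=\; \mathcal{O}\!\paren{\paren{\sum_i \frac{\ONum\,\horizon^2}{\Delta_i(s)^2}}\log\!\paren{\tfrac{\ONum}{\delta}}}.
\]

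There is no hard technical obstacle; the work is essentially book-keeping two union bounds (one for $\widehat V_i$, one for $N_i(T)$) and verifying that the Chernoff requirement on $T$ is dominated by the final sample bound. The conceptual point is that, unlike the UCB-based \lopsaps analysis where sampling effort automatically concentrates on oracles with small $\Delta_i(s)$ and the bound scales like $\sum_i 1/\Delta_i(s)^2$, uniform sampling forces every arm's count to scale as $T/\ONum$. Distinguishing even a single arm therefore costs an extra factor of $\ONum$, which is precisely the gap between the bounds of Theorem~\ref{thm:APS-sample-complexity} and Theorem~\ref{thm:Mamba-sample-complexity}.
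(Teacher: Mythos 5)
Your proof is correct and follows essentially the same route as the paper's: Hoeffding concentration for the value estimates at state $s$, a union bound over the $\ONum$ oracles, and the observation that uniform selection allocates only a $1/\ONum$ fraction of the $T$ visits to each oracle, which is exactly where the extra factor of $\ONum$ enters relative to Theorem~\ref{thm:APS-sample-complexity}. The one place you are more careful than the paper is in treating $N_i(T)$ as $\mathrm{Binomial}(T,1/\ONum)$ and invoking a multiplicative Chernoff bound to guarantee $N_i(T)\geq T/(2\ONum)$ simultaneously for all $i$; the paper simply substitutes $\lfloor T/\ONum\rfloor$ samples per arm as if the allocation were deterministic, a harmless constant-factor gloss that your argument closes.
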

The uniform policy selection strategy incurs a cost of $\widetilde{\mathcal{O}}(KH^2\log(K))$, while \algname
necessitates merely $\widetilde{\mathcal{O}}(K+H^2\log(K))$ oracle calls. Consequently, when dealing with a large number of oracles (that is, when $K$ is sizeable), the superiority of \algname
over the uniform policy selection strategy becomes increasingly pronounced.

\algname-SE,
with its active state exploration, bypasses unnecessary exploration in states for which we are sufficiently confident regarding the best oracle based on its state value. Theorem~\ref{thm:ase:threshold} provides a stopping criterion for \lopsase: 

{
\begin{restatable}[]{thm}{LOPSASELowerBound}\label{thm:ase:threshold}
 For any state $\state$ and $\ONum$ experts, with probability at least \fix{$1-\delta$}, \algname-SE identifies the best oracle when 
 the exploration bonus reaches the uncertainty threshold
    \[ \threshold = o\paren{ \sqrt{\frac{{2\horizon^2\log\frac{4}{\delta}}}{{K + \left(\sum_i \frac{\horizon^2}{\Delta_i^2}\right) \log\left(\frac{2K}{\delta}\right)}}}},\]
    where $\Delta_i(s) := \max_{j\neq i^{\star}} V_{i^{\star}}(s) - V_{j}(s)$, $i^{\star}=\max_i V_i(s)$ is the suboptimality gap of oracle $i$ at state $s$.
\end{restatable}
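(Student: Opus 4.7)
The plan is to reduce the statement to Theorem~\ref{thm:APS-sample-complexity}, which already characterizes how many visits to state $\state$ are sufficient for active policy selection to identify the best oracle. The key observation is that in the discrete setting the exploration bonus admits the explicit form $\Gamma_k(\state) = \sqrt{2\horizon^2\log(2/\delta)/\ENum_k(\state)}$, so any threshold on the bonus is equivalent by direct inversion to a lower bound on the number of times oracle $k$ has been queried from $\state$. Solving $\Gamma_{k_\star}(\state) \leq \threshold$ for the threshold stated in the theorem therefore corresponds to requiring $\ENum_{k_\star}(\state) \gtrsim K + \paren{\sum_i \horizon^2/\Delta_i^2}\log(2K/\delta)$, which matches the sample complexity appearing in Theorem~\ref{thm:APS-sample-complexity} up to constants absorbed by the $o(\cdot)$.

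Next, I would invoke Theorem~\ref{thm:APS-sample-complexity} with failure probability $\delta/2$, yielding that whenever the total visit count $T$ to $\state$ exceeds $\mathcal{O}\paren{K + \paren{\sum_i \horizon^2/\Delta_i^2}\log(2K/\delta)}$, the UCB-style selection rule correctly returns $k_\star$ with probability at least $1-\delta/2$. To connect $\ENum_{k_\star}(\state)$ to $T$, I would use the standard instance-dependent UCB pull-count bound: with probability at least $1-\delta/2$, each suboptimal oracle $j$ is queried at most $\mathcal{O}(\log(T)/\Delta_j^2)$ times, so $\ENum_{k_\star}(\state) = T - \sum_{j\neq k_\star}\ENum_j(\state)$ retains the same asymptotic order as $T$ in the regime where $T$ dominates the lower-order exploration terms. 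Consequently, once $\ENum_{k_\star}(\state)$ reaches the inverted-bonus threshold, $T$ also exceeds the sample complexity required by Theorem~\ref{thm:APS-sample-complexity}, and the best oracle is identified.

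A union bound over the two concentration events then yields identification with probability at least $1-\delta$; the halving of the failure budget is precisely what produces the $\log(4/\delta)$ and $\log(2K/\delta)$ in the stated threshold, in contrast with the $\log(2/\delta)$ and $\log(K/\delta)$ that appear in Theorem~\ref{thm:APS-sample-complexity}. The main obstacle I anticipate is the bridging argument: making rigorous the claim that $\ENum_{k_\star}(\state)$ is of the same order as $T$ requires careful treatment of the suboptimal-pull term $\sum_i \log(T)/\Delta_i^2$, which is why the stated threshold is phrased in $o(\cdot)$ rather than $\Theta(\cdot)$ form. A secondary subtlety is handling the continuous-state regime, where the ensemble-based bonus $\sigma_k(\state)$ replaces the closed-form Hoeffding term; there one would need an additional assumption that the ensemble spread concentrates at a comparable rate so that the same bonus-to-samples inversion remains meaningful.
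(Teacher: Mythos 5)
Your proposal is correct and follows essentially the same route as the paper: invert the Hoeffding-based bonus of Lemma~\ref{lem:explore_bonus} to turn the threshold $\threshold$ into a sample-count requirement, match that count against the visitation bound of \thmref{thm:APS-sample-complexity} instantiated with halved failure probability, and union-bound the two events to recover the $\log(4/\delta)$ and $\log(2K/\delta)$ factors. The bridging step you flag as the main obstacle is in fact dispensable for the stated (sufficiency) direction, since $T \geq \ENum_{k_\star}(\state)$ holds trivially and the paper simply equates the two quantities inside the $\mathcal{O}(\cdot)$.
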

When $\threshold$ is set too large, the learner runs the risk of not calling any oracle and consequently having too much uncertainty on the estimate $\hat{V}^{k_\star}(s)$, leading to a large bias in the gradient estimates (and therefore the regret). On the other hand, when $\threshold$ is too small, the learner may stop rolling in $\pi_n$ early (at Line~\ref{lin:ase} of Algorithm~\ref{alg:lops}), and therefore waste collecting samples on states that are sufficient confident. In the next section, we will show that with proper choices of $\Gamma_s$ as suggested by \thmref{thm:ase:threshold}, \algname-SE strikes a good empirical balance between exploring uncertainty states and collecting useful training examples for improving $\pi_n$.
}

\section{Experiments}

In this section, we perform an empirical study of \algname and \algname-SE, comparing them to the best-oracle, PPO-GAE, and MAMBA baselines. We find that both \algname and \algname-SE outperform these baselines in most scenarios.

\subsection{Experiment Setup}

\textbf{Environments.} We evaluate our method on four continuous state and action environments: Cheetah-run, CartPole-swingup, Pendulum-swingup, and Walker-walk, which are part of the DeepMind Control Suite \citep{tassa2018deepmind}.

\textbf{Oracle Policies.} We train the oracle policies for each environment using proximal policy optimization (PPO)~\citep{schulman2017proximal} integrated with a generalized advantage estimate (GAE)~\citep{schulman2015high} and soft actor-critic (SAC)~\citep{haarnoja2018soft}. The weights of the learner policy are periodically saved as checkpoints. Generally, the average performance of the oracles increases monotonically as training progresses, which implies that each checkpoint represents an oracle of progressively improving quality.

\textbf{Baseline Methods.} We evaluate \algname and \algname-SE against three representative baselines: (1) the best oracle; (2) proximal policy optimization with a generalized advantage estimate (PPO-GAE) serving as a pure RL baseline; and (3) MAMBA, the  state-of-the-art method for online imitation learning from multiple black-box oracles. For further details, we refer the reader to \appref{app:experiment:baselines}.

{\textbf{Setup.} To guarantee a fair evaluation, \algname, \algname-SE, MAMBA, and PPO-GAE are assessed based on an equal number of environment interactions. Specifically, for \algname, \algname-SE, and MAMBA, the oracle roll-out is used to update the learner policy, with each training iteration involving RIRO through both the learner's policy and the selected oracle's policy. Hence, we ensure that the total number of environment interactions for PPO-GAE 
is the same as that of \algname, \algname-SE and MAMBA.

\begin{figure}[t]
    \centering
    \rotatebox{90}{ \quad \quad \quad \scriptsize {Freq. of selected oracle }}
    \begin{subfigure}{0.475\linewidth}
        \includegraphics[width=1.0\linewidth,  clip={0,0,0,0}]{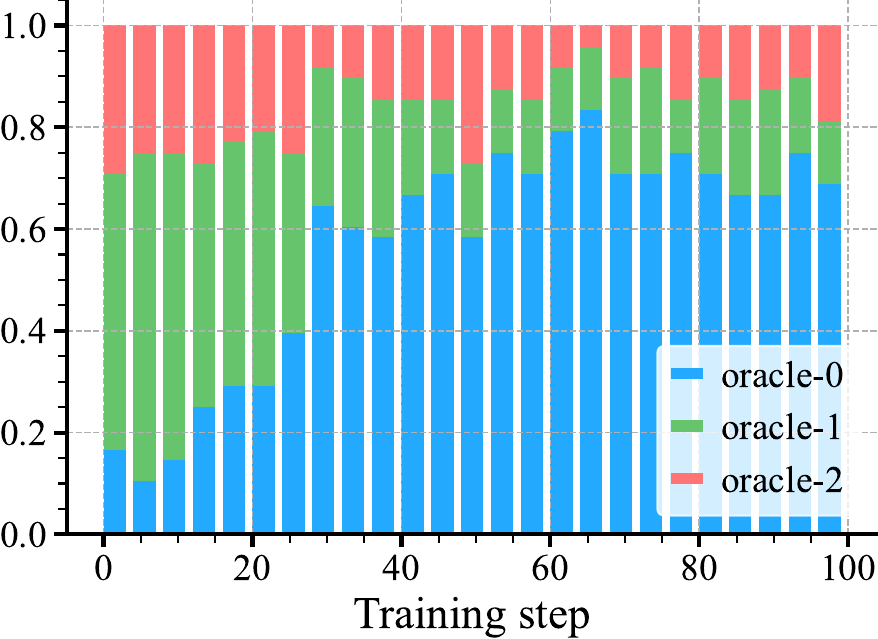}
        \caption{\algname}
    \end{subfigure}\hfil
    \begin{subfigure}{0.475\linewidth}
        \includegraphics[width=1.0\linewidth,  clip={0,0,0,0}]{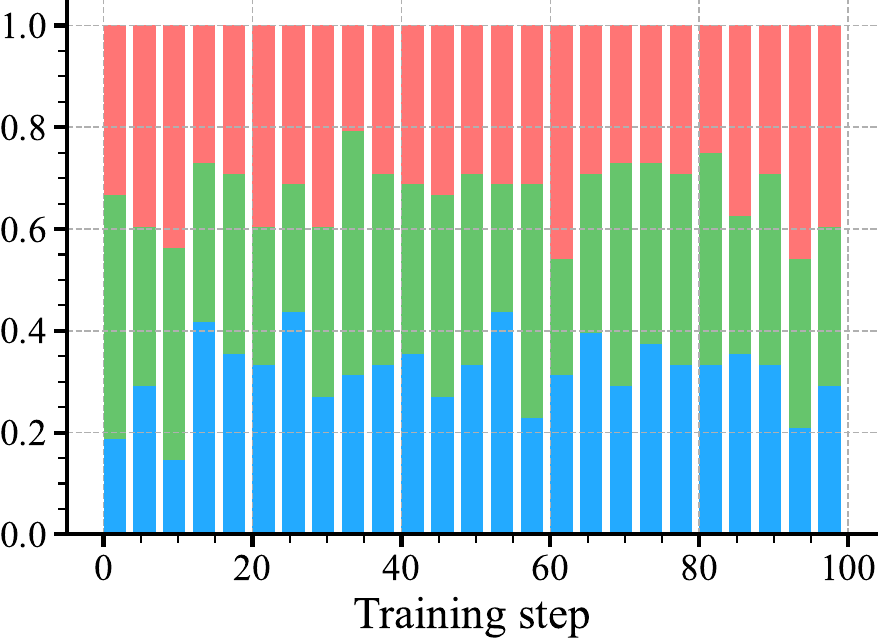}
        \caption{MAMBA}
    \end{subfigure}
    \caption{A comparison of the frequency with which \textbf{(a)} \algname and \textbf{(b)} MAMBA select among a bad (in red), mediocre (in green), and good (in blue) oracle in a three-oracle Cheetah-run experiment. \algname efficiently identifies each oracle's quality as indicated by the frequency with which
    it queries the good oracle. In contrast, MAMBA maintains roughly the same selection frequency for the bad, mediocre, and good oracles throughout.}
  \label{fig:aps:select_oracle}
  \vspace{-0.5cm}
\end{figure}

\subsection{Active Policy Selection }

\paragraph{Performance}  Figure~\ref{fig:aps} compares \algname against the best oracle, PPO-GAE, and MAMBA on Cheetah-run, Cartpole-swingup, Pendulum-swingup, and Walker-walk with a multi-oracle set. We observe that \algname 
outperforms MAMBA and the other baselines including the best oracle in all domains except for Walker-walk. In the case of Walker-walk, we suspect that relatively high quality of the oracles \fix{with a limited transition buffer size} result in less accurate value function estimates in states that the learner encounters early in training.
Further, we see that the performance \algname improves sooner during training, demonstrating the sample efficiency advantages of \algname.

}

\paragraph{Effect of active policy selection}

The superior performance of \algname can be explained by how \algname actively selects a better oracle, in contrast with the random selection process employed by MAMBA. Figure~\ref{fig:aps:select_oracle}  illustrates the frequency with which each oracle is chosen by an algorithm at each training iteration. \algname indeed has a clear preference in its oracle selection,
quickly identifying the qualities of the oracles and efficiently learning from the best one to improve performance.
This observation is aligned with the results from~{\thmref{thm:APS-sample-complexity}} %
and is directly evident from the improved performance and decreased \fix{bias term}
of the gradient estimates. In contrast, MAMBA {spares} a lot of resources to improve the value function estimate of bad or mediocre oracles that do not necessarily help the learner policy improve, negatively affecting sample efficiency.

\subsection{Active State Exploration}

\begin{figure}[t]
\centering
    \begin{subfigure}{0.235\textwidth}
        \centering
        \includegraphics[width=\linewidth]{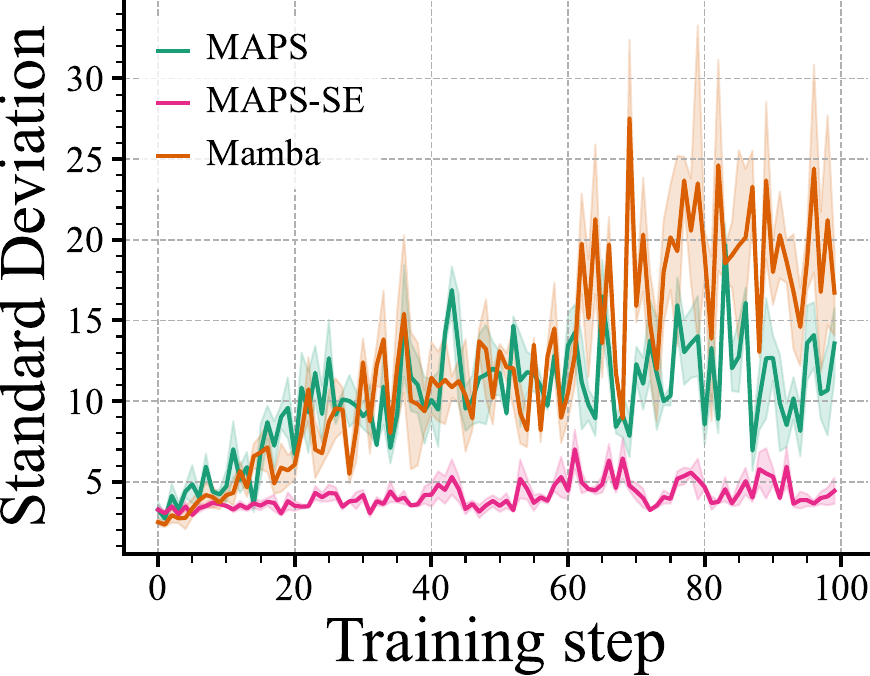}
    \end{subfigure}
    \begin{subfigure}{0.235\textwidth}
        \centering
            \includegraphics[width=\linewidth]{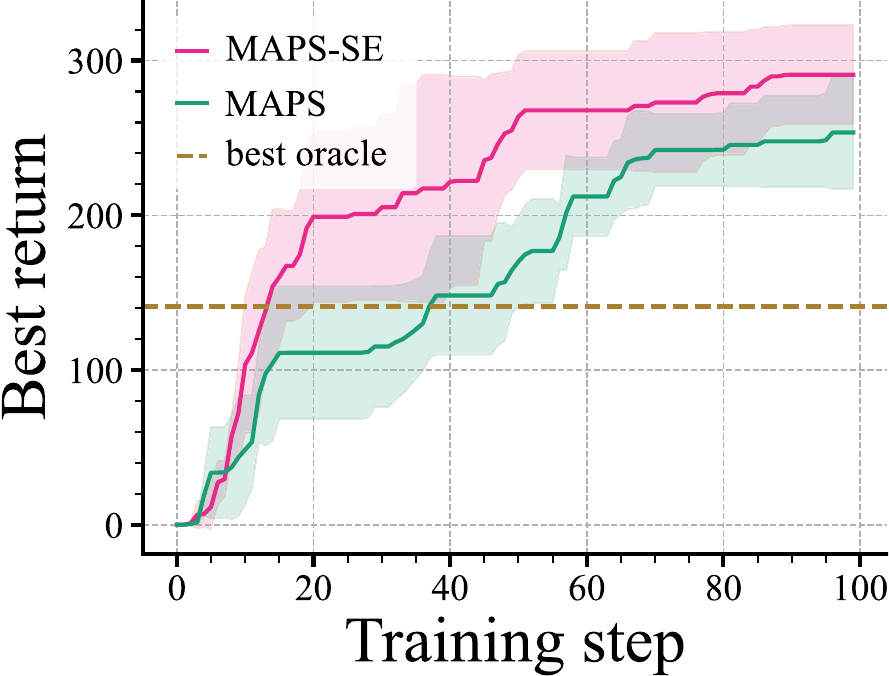}
    \end{subfigure}
    \caption{
    \textbf{Left}: A demonstration of the benefits of active state exploration in terms of a comparison between the standard deviation for switch-state for \algname-SE (in blue), {\algname (in orange)} and MAMBA (in green) in the Cheetah-run environment with the same set of oracles used in Figure~\ref{fig:aps}.
    under a threshold $\threshold=2.5$. The predicted standard deviation is evaluated at the switching state from learner policy to oracle. \textbf{Right}: A comparison of the best-return-so-far \fix{in a}
    multiple oracle set between \fix{\algname-SE with \algname and the best oracle baseline} on the Cheetah-run environment.}\label{fig:ase:effect}
    \vspace{-0.5cm}
\end{figure}

\paragraph{Performance} Active state exploration requires a threshold, $\threshold$ that indirectly modulates the uncertainty of a state's predicted value. The theoretical analysis in \thmref{thm:ase:threshold} reveals that this threshold is directly connected to the \fix{bias term}
of the gradient estimates. In essence, the threshold $\threshold$ trades off between 
\fix{bias term}
and sample efficiency. When the threshold is set sufficiently high, the learner policy never transitions to an oracle, resulting in substantial \fix{bias} 
error due to the inaccurate estimate of the oracles' value functions. On the other hand, an extremely low threshold prompts the learner to immediately switch to an oracle, unless the uncertainty of the predicted value diminishes significantly. Consequently, the model spends large number of environment steps to improve the estimation of the value function.

As illustrated in Figure~\ref{fig:ase:effect}, the \algname-SE algorithm outperforms the \algname algorithm by preventing unnecessary exploration. 
{However, it is worth noting that \algname-SE has limitations regarding its dependence on the environment and the need for non-trivial hyper-parameter tuning.}

\paragraph{Effect of active state exploration} Figure~\ref{fig:ase:effect} illustrates that 
active state exploration significantly decreases the standard deviation of the predicted value at switching states. 
This is in contrast to the increasing trend of MAMBA {and \algname}.
MAMBA randomly chooses a switching timestep as well as the oracle to switch to. This risks exposing the value function (of the selected oracle) to an unfamiliar state, resulting in an increase in the standard deviation. 
A small standard deviation in \algname-SE \fix{may result from a small variance $\var$} of the gradient estimates, resulting in improved performance.

\section{Conclusion}
We've introduced \algname, a novel active policy improvement algorithm that learns from multiple black-box oracles.
It is motivated by real-world scenarios where one wants to learn a state-dependent policy efficiently from multiple suboptimal oracles. \algname incorporates active policy selection and active state exploration that provably improve sample complexity over the current state-of-the-art approach, MAMBA. Empirically, our results demonstrate that \algname excels at identifying the state-wise quality of black-box oracles, learns from the best oracle 
\fix{more efficiently than the other baselines on various benchmarks.}

\section*{Acknowledgements}
We thank Ching-An Cheng for constructive suggestions. We also thank Yicheng Luo, Ziyu Ye and Zixin Ding for initial discussion. This work is supported in part by the RadBio-AI project (DE-AC02-06CH11357), U.S. Department of Energy Office of Science, Office of Biological and Environment Research, the Improve project under contract (75N91019F00134, 75N91019D00024, 89233218CNA000001, DE-AC02-06-CH11357, DE-AC52-07NA27344, DE-AC05-00OR22725), the Exascale Computing Project (17-SC-20-SC), a collaborative effort of the U.S.\ Department of Energy Office of Science and the National Nuclear Security Administration (DOE DE-EE0009505), and NSF HDR TRIPODS (2216899).

\bibliography{reference}
\bibliographystyle{icml2023}

\flushcolsend

\onecolumn

\appendix

\vfill

\clearpage

\section{Notations}\label{app:notations}

Table~\ref{table:notation} summarizes the notation used in the main paper.
\begin{table}[ht]
\centering
\vspace{-0.2cm}
\scalebox{0.85}{
\begin{tabular}{l l}
\toprule
\textbf{Notation} & \textbf{Description} \\
\midrule
&\textbf{\quad\quad\quad\quad\quad Problem Statement}\\
$\VFunc$, $\widehat{\VFunc}$ & value function , value function estimator \\
$\MDP_0$&$\paren{\stateSpace,\actionSpace,\transDynamics,\RFunc,\discFactor}$, finite-horizon MDP\\
$\stateSpace$ & state space\\
$\actionSpace$ & action space\\
$\transDynamics$, $\transDynamics\paren{\state'|\state,\action}$ & transition dynamics, $\transDynamics: \stateSpace \times \actionSpace \rightarrow \Delta \stateSpace$\\
$\ENum$ & total number of episodes\\
$\RFunc$, $\RFunc\paren{\state,\action}$ & reward function, $\RFunc:\stateSpace\times\actionSpace\rightarrow\bracket{0,1}$ \\
$\Policies$ & $\Policies=\curlybracket{\policy^k},k\in\bracket{\ONum}$\\
$\policy$ & policy, $\policy: \stateSpace \rightarrow \Delta \actionSpace$ maps the current state to a distribution over actions\\
{$\policy^k$, $\policy_n$ } & {$k_{th}$ oracle policy, $n_{th}$ learner policy} \\
$\Func$&$f:\stateSpace\rightarrow \mathcal{R}$\\
$\QFunc_t^{\policy}\paren{\state,\action}$&$ \reward\paren{\state,\action}+\expctover{\state'\sim \transDynamics|\state,\action}{\Func\paren{\state'}}$\\
$\stateDist_{\policy}^t$ & the distribution over states at time $t$\\
$\dataset_{\policy}$ & dataset\\
$\discFactor$ & discount factor\\
$\lambda$ & blending value of RL and IL objective\\
$\trajectory_t$& $ \state_t, \action_t,..., \state_{\horizon-1},\action_{\horizon-1}$ \\
$\rho^{\policy}\paren{\trajectory_t|\state_t}$ & trajectory distribution\\
$\AFunc^{\Func}\paren{\state,\action}$  & advantage function\\
$\policy^{\bullet}\paren{\action|\state}$ & $\policy^{k_s}\paren{\action|\state}, \textrm{where } k_s:=\argmax_{k\in\bracket{\ONum}} \VFunc^{k}\paren{\state} $\\
$\Func^{max}\paren{\state_t}$&$\max_{k\in \bracket{\ONum}}\VFunc^{k}\paren{\state}$\\
$\policy^{max}\paren{\action|\state}$ & $\delta_{\action=\action_s}, \textrm{where } \action_s:=\argmax_{\action\in\actionSpace} \AFunc^{max}\paren{\state,\action}$\\
$\policy^e$ & single oracle\\
$\policy^e_{+}$ & one-step policy improvement from $\policy^e$\\
$\Adv^{max,\policy}_{\lambda}\paren{\state,\action}$ &  $\lambda$-weighted advantage\\
$\loss_n\paren{\policy;\lambda}$&$-(1-\lambda)\horizon \expctover{\state\sim\stateDist}{\AFunc^{max,\policy}_{\lambda}\paren{\state,\policy}}-\lambda \expctover{\state\sim\stateDist_0}{\AFunc_{\lambda}^{max,\policy}\paren{\state,\policy}}$\\
$\triangledown\widehat{\loss}_n\paren{\policy_n;\lambda}$&$ -\horizon \mathbb{E}_{\state\sim\stateDist^{\policy_n}}\expctover{\action\sim\policy|\state}{\triangledown\log\policy\paren{\action|\state}\widehat{\AFunc}^{\policy}_{\lambda}\paren{\state,\action}}|_{\policy=\policy_n}$\\
$\widehat{\VFunc}^{\policy_k}\paren{\state_t}$&$\frac{1}{\ENum_k\paren{\state_t}}\sum_{i=1}^{\ENum_k\paren{\state_t}}\sum_{j}^{\horizon}\lambda^j\RewardFunc\paren{\state_j,\action_j}$\\
$\ENum_{k}\paren{\state_t}$ & the number of trajectories that start from initial state $\state_t$ collected by the $k_{th}$ oracle\\
$\ENum\paren{\state_t}$ & the number of trajectories that start from initial state $\state_t$\\
\hline
&\textbf{\quad\quad\quad\quad\quad Algorithm}\\
$\textrm{APS}$ & Active Policy Selection\\
$\textrm{ASE}$ & Active State Exploration\\
$\Gamma_{k_{\star}}\paren{\state_t}$, $\sigma_k\paren{\state_t}$ & exploration bonus, uncertainty\\
$\threshold$ & threshold\\
$t_e$ & the round switch to oracle from learner \\
$\state_{t_e}$ & switching state \\
\hline
&\textbf{\quad\quad\quad\quad\quad Analysis}\\
{$\zeta_N$} &$ -\frac{1}{\ENum}\sum_{n=1}^{\ENum}\loss_n\paren{\policy^{\textrm{max}}}$\\
$\epsilon_{\ENum}\paren{\Policies}$ & $\min_{\policy\in \Policies} \frac{1}{\ENum}\paren{\sum_{n=1}^{\ENum}\loss_{n}\paren{\policy}-\sum_{n=1}^{\ENum}\loss_n\paren{\policy^\text{max}}}$\\

{$\text{Regret}_N$ }&    $ \text{Regret}_N = \sum_{n=1}^{\ENum}\loss_{n}\paren{\policy_n}-\min_{\policy\in \Policies} \sum_{n=1}^{\ENum}\loss_n\paren{\policy}$ \\

$\Delta_N$ & $\frac{-1}{\ENum}\sum_{n=1}^{\ENum}\loss_n\paren{\policy^{\textrm{max}}}$\\
$\Delta_i(s)$ & $ V_{i^\star}(s) - V_{i}(s)$\\
$\beta^{\textrm{max}}$ & 
the maximum difference in loss between $\pi^n$ and the optimal oracle at $s$\\
$\var$ & variance\\
\bottomrule
\end{tabular}
}
\caption{Notations used in the main paper}
\label{table:notation}
\vspace{-10mm}
\end{table}

\clearpage

\section{Related Work: Problem Setup}\label{app:problem_setup}
For better positioning of this work, we highlighted the key differences and compared our setting against a few related works in this domain in the problem setup in \tabref{table:alg_characteristics}.

\begin{table*}[ht]
\scalebox{0.9}{
\begin{tabular}{l l l l l l l l }
\toprule
\textbf{Algorithm}
& \textbf{Criterion }
& \textbf{Online}
& \textbf{Stateful}
& \textbf{Active}
& \textbf{Interactive}
& \makecell[c]{\textbf{Multiple}\\ \textbf{oracles}}
& \makecell[c]{\textbf{Sample} \\ \textbf{efficient} \\ \fix{\textbf{under multi-oracles}} }
\\
\hline
{\makecell[l]{Behavioral cloning\\{\citep{pomerleau1988alvinn}}}}
&  IL 
& \makecell[c]{$\times$ }
& \makecell[c]{$\checkmark$}
& \makecell[c]{$\times$ }
& \makecell[c]{$\times$ }
& \makecell[c]{$\times$ }
& \makecell[c]{$-$ }
\\
\hline
{\makecell[l]{{SMILE}\\\citep{daume2009search}}}
&  IL 
& \makecell[c]{$\times$ }
& \makecell[c]{$\checkmark$}
& \makecell[c]{$\times$ } 
& \makecell[c]{$\times$ }
& \makecell[c]{$\times$ }
& \makecell[c]{$-$ }
\\
\hline
{\makecell[l]{DAgger \\\citep{ross2010efficient}}}
& IL
& \makecell[c]{$\checkmark$}
& \makecell[c]{$\checkmark$}
& \makecell[c]{$\times$ } 
& \makecell[c]{$\checkmark$}
& \makecell[c]{$\times$ }
& \makecell[c]{$-$ }
\\
\hline
{\makecell[l]{AggreVaTe \\{\citep{ross2014reinforcement}} }}
& IL
& \makecell[c]{$\checkmark$}
& \makecell[c]{$\checkmark$}
& \makecell[c]{$\times$ } 
& \makecell[c]{$\checkmark$}
& \makecell[c]{$\times$ }
& \makecell[c]{$-$ }
\\
\hline
{\makecell[l]{PPO with GAE \\{\citep{schulman2017proximal, schulman2015high}} }}
& RL
& \makecell[c]{$\checkmark$}
& \makecell[c]{$\checkmark$}
& \makecell[c]{$\times$ } 
& \makecell[c]{$\times$ }
& \makecell[c]{$\times$ }
& \makecell[c]{$\times$ }
\\
\hline
{\makecell[l]{AggreVaTeD \\{\citep{sun2017deeply}} }}
& IL
& \makecell[c]{$\checkmark$}
& \makecell[c]{$\checkmark$}
& \makecell[c]{$\times$ } 
& \makecell[c]{$\checkmark$}
& \makecell[c]{$\times$ }
& \makecell[c]{$-$ }
\\
\hline
{\makecell[l]{LEAQI \\{\citep{brantley2020active}}  }}
& IL
& \makecell[c]{$\checkmark$}
& \makecell[c]{$\checkmark$}
& \makecell[c]{$\checkmark$}
& \makecell[c]{$\checkmark$}
& \makecell[c]{$\times$ }
& \makecell[c]{$-$}
\\
\hline
{\makecell[l]{MAMBA \\{\citep{cheng2020policy}}  }}
&  IL $+$ RL 
& \makecell[c]{$\checkmark$}
& \makecell[c]{$\checkmark$}
& \makecell[c]{$\times$ } 
& \makecell[c]{$\checkmark$}
& \makecell[c]{$\checkmark$}
& \makecell[c]{$\times$ }
\\
\hline
{\makecell[l]{A-OPS \\{\citep{konyushova2021active}} }}
& Policy Selection
& \makecell[c]{$\times$}
& \makecell[c]{$\times$ }
& \makecell[c]{$\checkmark$}
& \makecell[c]{$\times$ }
& \makecell[c]{$\checkmark$}
& \makecell[c]{$\checkmark$}
\\
\hline
{\makecell[l]{ILEED\\{\citep{ilbelaiev}}}}
& IL
& \makecell[c]{$\times$}
& \makecell[c]{$\checkmark$}
& \makecell[c]{$\times$ } 
& \makecell[c]{$\times$ }
& \makecell[c]{$\times$ }
& \makecell[c]{$\times$ }
\\
\hline
{\makecell[l]{CAMS\\{\citep{liu2022cost}} }}
& Model Selection
& \makecell[c]{$\checkmark$}
& \makecell[c]{$\times$ }
& \makecell[c]{$\checkmark$}
& \makecell[c]{$\times$ }
& \makecell[c]{$\checkmark$}
& \makecell[c]{$\checkmark$}
\\
\hline
{\makecell[l]{\algname\\(ours) }}
& IL $+$ RL 
& \makecell[c]{$\checkmark$}
& \makecell[c]{$\checkmark$}
& \makecell[c]{$\checkmark$}
& \makecell[c]{$\checkmark$}
& \makecell[c]{$\checkmark$}
& \makecell[c]{$\checkmark$}
\\
\bottomrule
\end{tabular}
}
\caption{Algorithms Characteristics}
\label{table:alg_characteristics}
\end{table*}

\section{\fix{Related Work: Theoretical Guarantees}}
We summarize the sample complexity {for identifying the best oracle per state} of related algorithms in \tabref{table:complexity}.

\begin{table}[ht]
\centering
\scalebox{0.85}{
\begin{tabular}{l c c}
\toprule
\textbf{Selection strategy}
& \textbf{Sample complexity} & \textbf{\threshold}
\\
\midrule
Uniform~(MAMBA)
&  $\mathcal{O}\paren{ \paren{\sum_i \frac{\ONum \horizon^2}{\Delta_i^2}} \log \paren{\frac{\ONum}{\delta}}}$ & ---
\\
APS~(\algname)
&  \makecell[c]{$\mathcal{O}\paren{\ONum + \paren{\sum_i \frac{\horizon^2}{\Delta_i^2} }\log \paren{\frac{\ONum}{\delta}}}$} & ---\\
ASE~(\algname-SE) & $\mathcal{O}\paren{\ONum + \paren{\sum_i \frac{\horizon^2}{\Delta_i^2} }\log \paren{\frac{\ONum}{\delta}}}$  &  ${o\paren{\sqrt{\frac{{2\horizon^2\log({4}/{\delta})}}{{K + \left(\sum_i {\horizon^2}/{\Delta_i^2}\right) \log\left({2K}/{\delta}\right)}}}}}$\\
\bottomrule
\end{tabular}}
\caption{APS~(\algname) achieves a significant reduction in sample complexity of scale \ONum compared to uniform~(MAMBA). ASE~(\algname-SE) exhibits the same sample complexity as APS, provided that a pre-set \threshold~condition is met.} \label{table:complexity}
\end{table}

\clearpage

\section{Proofs for Theorems}\label{app:theory}

\SampleComplexityAPS*
\begin{proof}
We first define the gap, for $i\neq i^\star$,
\begin{align}
    \Delta_i\paren{\state} = \VFunc_{i^\star}\paren{\state} - \VFunc_{i}\paren{\state}.  
\end{align}
For the case of $i=i^\star$,
\begin{align}
    \Delta_{i^\star} = \min_{i\neq i^\star} \VFunc_{i^\star}\paren{\state} - \VFunc_{i}\paren{\state}. 
\end{align}
Then, we define the following event for a state $s$ %
\begin{align}
    E\paren{\state} = \left\{\forall i \in [\ONum], t \in [T], \left|\widehat{V}_{i, t}\paren{\state} - \VFunc_i\paren{\state}\right|\leq \frac{1}{5}\sqrt{\frac{a}{t}} \right\}
\end{align}

By Hoeffding's inequality, we will have that 
\begin{align}
    \Pr{\left|\widehat{V}_{i, t}\paren{\state} - \VFunc_i\paren{\state}\right|\leq \frac{1}{5}\sqrt{\frac{a}{t}} } \geq 1 - 2\exp \paren{\frac{-a}{50 \horizon^2}}.
\end{align}
By applying the union bound, we have
\begin{align}
    \Pr{E\paren{\state}} \geq 1 - 2T\ONum\exp \paren{\frac{-a}{50 \horizon^2}}.
\end{align}

In the next, we will show that, under the event $E\paren{\state}$, we identified the best policy if
\begin{align}\label{eqn:to-show}
    \frac{1}{5}\sqrt{\frac{a}{N_i(T)}} \leq \frac{\Delta_i}{2}, \; \forall i \in [\ONum],
\end{align}
where $N_i(t)$ denotes the number of i.i.d samples used for estimating the value function of oracle $i$ at $t_{th}$ iteration. If the above holds, then we will have for $i\neq i^\star$,
\begin{subequations}
    \begin{align}
        \widehat{\VFunc}_{i, N_i(T)}(\state) &\leq \VFunc_{i}(\state) + \frac{1}{5}\sqrt{\frac{a}{N_i(T)}} \\
        &\leq \frac{\VFunc_{i^\star}(\state) + \VFunc_{i}(\state)}{2}.\label{eqn:upperbound-v-i}
    \end{align}
\end{subequations}
For $i=i^\star$, we simply have
\begin{subequations}
    \begin{align}
        \widehat{\VFunc}_{i^\star, N_{i^\star}(T)}(\state) &\geq V_{i^\star}(\state) - \frac{1}{5} \sqrt{\frac{a}{N_{i^\star}(T)}} \\
        &\geq V_{i^\star}(\state) - \frac{\Delta_{i^\star}}{2} \\
        &= V_{i^\star}(\state) - \frac{\min_{i:i\neq i^\star} V_{i^\star}(\state) - V_{i}(\state)}{2} \\
        &=\frac{\max_{i:i\neq i^\star} V_{i}(\state) + V_{i^\star}(\state)}{2}.\label{eqn:lowerbound-v-istar}
    \end{align}
\end{subequations}
Hence, by the results from equation~\ref{eqn:lowerbound-v-istar} and equation~\ref{eqn:upperbound-v-i}, we have
\begin{align}
    \widehat{\VFunc}_{i^\star, N_{i^\star}(T)}(\state) \geq  \widehat{\VFunc}_{i, N_i(T)}(\state) .
\end{align}
 In the next, we will prove Equation~\ref{eqn:to-show} is true. This is equivalent to show the following,
\begin{align}
    N_i(T) \geq \frac{4}{25}\frac{a}{\Delta_i^2}
\end{align}

 We first show that 
\begin{align}
    N_i(t) \leq \frac{36}{25} \frac{a}{\Delta_i^2} + 1,\; \forall i\neq i^\star.
\end{align}
We prove the above result by induction. Firstly, when $t=1$, this is obviously ture. Suppose that it also holds at time $t-1$. Therefore, the policy selected at time step $t$ is not $i$, we will have $N_i(t) = N_i(t-1)$, and the above inequality stil holds. If the policy selected at time step $t$ is $i_{th}$ policy, then, this implies 
\begin{align}
    \widehat{V}_{i, N_i(t-1)} + \sqrt{\frac{a}{N_i(t-1)}} \geq \widehat{V}_{i^\star, N_{i^\star}(t-1)} + \sqrt{\frac{a}{N_{i^\star}(t-1)}} 
\end{align}
In addition, we have the event $E\paren{\state}$ is true. We further have
\begin{subequations}
    \begin{align}
        &\widehat{V}_{i^\star, N_{i^\star}(t-1)} + \sqrt{\frac{a}{N_{i^\star}(t-1)}}  \geq \VFunc_{i^\star}\paren{\state},\\
        & \widehat{V}_{i, N_i(t-1)} +\sqrt{\frac{a}{N_i(t-1)}} \leq \VFunc_{i}\paren{\state} + \frac{6}{5}\sqrt{\frac{a}{N_i(t-1)}}, \\
        \Rightarrow \quad & \frac{6}{5}\sqrt{\frac{a}{N_i(t-1)}}\geq \Delta_i.
    \end{align}
\end{subequations}
Since $N_i(t) = N_{i}(t-1)$, we get
\begin{align}
   N_i(t) \leq \frac{36}{25}\frac{a}{\Delta_i^2} + 1. 
\end{align}

Then, we prove the following,
\begin{align}
    N_i(t) \geq \frac{4}{25} \min \left\{\frac{a}{\Delta_i^2}, \frac{25}{36}(N_{i^\star}(t) - 1)\right\},\;\; \forall i\neq i^\star.
\end{align}
Following the same inductive proof, we only need to show the above is true if the policy $i^\star$ is selected at iteration $t$. Since the event $E\paren{\state}$ is true, we further have
\begin{align}
    \VFunc_{i^\star}\paren{\state}  + \frac{6}{5}\sqrt{\frac{a}{N_{i^\star}(t-1)}} \geq \VFunc_i\paren{\state} + \frac{4}{5}\sqrt{\frac{a}{N_i(t-1)}}
\end{align}
which gives us
\begin{subequations}
    \begin{align}
        &N_i(t-1) \geq \frac{16}{25} \frac{a}{\paren{\Delta_i + \frac{6}{5}\sqrt{\frac{a}{N_{i^\star}(t-1)}}}^2} \geq \frac{16}{25} \frac{a}{4\max\left\{\Delta_i ,\frac{6}{5}\sqrt{\frac{a}{N_{i^\star}(t-1)}}\right\}^2}.\\
        \Rightarrow\quad & N_i(t-1) \geq \frac{4}{25} \min\left\{\frac{a}{\Delta_i^2}, \frac{25}{36}(N_{i^\star}(t)-1)\right\}.
    \end{align}
\end{subequations}

Now, it remains to show for $i^\star$. We have
\begin{align}
    N_{i^\star}(T) - 1 = T- 1 - \sum_{i \neq i^\star} N_i(T) \geq T - \ONum - \frac{36}{25}a\sum_{i\neq i^\star} \Delta_i^2
\end{align}
By assumption, we have
\begin{align}
    a \leq \frac{25}{36}\frac{T-\ONum}{\sum_{i} \Delta_i^{-2}}
\end{align}
we get
\begin{align}
N_{i^\star}(T) - 1 \geq \frac{36}{25} a \Delta_{i^\star}^{-2},
\end{align}
Now, let 
\begin{align}
    2T\ONum \exp\paren{\frac{-2a}{50 \horizon^2}}= \delta.
\end{align}
We can conclude that, the optimal policy is identified with probability at least $1-\delta$, if %
\begin{align}
    T = \mathcal{O} \paren{\ONum + \paren{\sum_i \frac{\horizon^2}{\Delta_i^2}} \log \paren{\frac{\ONum}{\delta}}}
\end{align}
\end{proof}

\SampleComplexityMamba*
\begin{proof}
    We bound the following probability for iterations $T$,
    \begin{align}
         \Pr{{\widehat{\VFunc}_{i, T_i}(\state) - \widehat{\VFunc}_{i^\star, T_{i^\star}}(\state) \geq 0} }= \Pr{\widehat{\VFunc}_{i, T_i}(\state) - \widehat{\VFunc}_{i^\star, T_{i^\star}}(\state) - (-\Delta_i) \geq \Delta_i}
    \end{align}
    By Hoeffding's inequality, we have
    \begin{subequations}
        \begin{align}
             \Pr{\widehat{\VFunc}_{i, T_i}(\state) - \widehat{\VFunc}_{i^\star, T_{i^\star}}(\state) \geq 0} &\leq \exp \paren{\frac{-2 \paren{\lfloor T/\ONum \rfloor \Delta_i }^2}{2 \lfloor T/\ONum\rfloor \horizon^2}} \\
             &=\exp \paren{- \left\lfloor \frac{T}{\ONum} \right\rfloor  \Delta_i^2/\horizon^2}.
        \end{align}
    \end{subequations}
    By union bound, we have
    \begin{align}
         \Pr{ \max_{i\neq i^\star}\widehat{\VFunc}_{i, T_i)}(\state) - \widehat{\VFunc}_{i^\star, T_{i^\star}}(\state) < 0} \geq 1 - \sum_{i}\exp \paren{- \frac{T}{\ONum \horizon^2} \Delta_i^2}.
    \end{align}
    By solving the following equation, 
    \begin{align}
        \sum_{i}\exp \paren{- \frac{T}{\ONum \horizon^2} \Delta_i^2} = \delta. 
    \end{align}
    We get 
    \begin{align}
        T = \mathcal{O} \paren{ \paren{ \sum_{i}\frac{\ONum \horizon^2}{ \Delta_i^2}} \log \paren{\frac{\ONum}{\delta}}}.
    \end{align}
    
\end{proof}

\clearpage

\clearpage

\subsection{Proof of \thmref{thm:ase:threshold}}

\begin{lemma}\label{lem:explore_bonus}
Given state $\state_t$, policy $\policy^k$, $\ENum_k\paren{\state_t}$ trajectories started from state $\state_t$, with probability $1-\delta$, we have
\[
\lvert {\VFunc}^{\policy^k}\paren{\state_t}-\widehat{\VFunc}^{\policy^k}\paren{\state_t}\rvert
\leq
\sqrt{\frac{2\horizon^2\log{\frac{2}{\delta}}}{\ENum_k\paren{\state_t}}}.\]
\end{lemma}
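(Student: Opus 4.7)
The plan is to recognize that $\widehat{V}^{\pi^k}(s_t)$ is just an empirical mean of $N_k(s_t)$ i.i.d.\ bounded returns, and then invoke Hoeffding's inequality. First I would set up the random variables explicitly: for $i = 1, \dots, N_k(s_t)$, let $G_i \doteq \sum_{j=0}^{H-1} \lambda^j r(s_j^{(i)}, a_j^{(i)})$ denote the (discounted) return of the $i$-th trajectory rolled out under $\pi^k$ starting from $s_t$. Each trajectory is drawn independently from $\rho^{\pi^k}(\cdot \mid s_t)$, and by the definition of the value function, $\mathbb{E}[G_i] = V^{\pi^k}(s_t)$. Since $r(\cdot,\cdot) \in [0,1]$ and $\lambda \le 1$, each $G_i \in [0,H]$.

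Next I would observe that $\widehat{V}^{\pi^k}(s_t)$, as given in Eq.~\ref{eq:mu_v}, is exactly $\frac{1}{N_k(s_t)}\sum_{i=1}^{N_k(s_t)} G_i$, so the claim reduces to a concentration bound for the empirical mean of i.i.d.\ random variables supported on $[0,H]$. Applying Hoeffding's inequality yields, for every $\epsilon > 0$,
\[
\Pr\!\left[\,\bigl|\widehat{V}^{\pi^k}(s_t) - V^{\pi^k}(s_t)\bigr| \ge \epsilon\,\right] \le 2\exp\!\left(-\frac{2 N_k(s_t)\,\epsilon^2}{H^2}\right).
\]
Setting the right-hand side equal to $\delta$ and solving for $\epsilon$ gives the stated exploration bonus (up to the constant used in the paper's convention for the sub-Gaussian parameter of a $[0,H]$-bounded variable).

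Finally, I would conclude by inverting: choosing $\epsilon = \sqrt{2H^2 \log(2/\delta)/N_k(s_t)}$ and taking complements gives, with probability at least $1-\delta$, the bound $|V^{\pi^k}(s_t) - \widehat{V}^{\pi^k}(s_t)| \le \sqrt{2H^2 \log(2/\delta)/N_k(s_t)}$, which is what is claimed. There is no real obstacle here; the only subtlety is making sure the $N_k(s_t)$ rollouts from $s_t$ are genuinely i.i.d.\ (which follows from the Markov property and the fact that $\pi^k$ is a fixed policy independent of the batch), and that the return is bounded by $H$ rather than a tighter quantity like $H/(1-\lambda)$—using the looser $[0,H]$ range matches the constant $2H^2$ in the lemma statement.
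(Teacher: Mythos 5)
Your proof is correct and follows essentially the same route as the paper: both treat $\widehat{V}^{\pi^k}(s_t)$ as the empirical mean of $N_k(s_t)$ i.i.d.\ returns bounded by $H$, apply Hoeffding's inequality, and invert to obtain the confidence radius. The only (immaterial) difference is the Hoeffding constant --- you use the two-sided form for variables of range $H$, which actually yields a radius a factor of $2$ smaller than the stated $\sqrt{2H^2\log(2/\delta)/N_k(s_t)}$ (the paper's constant arises from assuming $|X_i|\le W$, i.e.\ range $2W$), so the claimed bound holds a fortiori, as you note.
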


\begin{proof}[Proof of \lemref{lem:explore_bonus}]
    
\emph{Hoeffding's inequality} \citep{hoeffding63} implies for $X_1, \ldots ,X_n$ independent random variables, and W such that for all $i$, $\lvert X_i \rvert\leq W$ with probability $1$, and $\mu=\frac{1}{n}\sum X_i$, we have for all $t > 0$ that
\begin{equation}\label{eq:hpBound}
  \Pr{|\Bar{X}-\mu|>\frac{t}{\sqrt{n}}}\leq 2\cdot \exp\paren{-\frac{t^2}{2W^2}}
\end{equation}

Let $\delta=2 \exp\paren{-\frac{t^2}{2W^2}}$, we have

\begin{equation}
        t=\sqrt{2W^2\log\paren{\frac{2}{\delta}}}.\label{eq:hoeffding_t}
\end{equation}

By replacing $t$ in Equation~\ref{eq:hpBound}, we have
\begin{align*}
  \Pr{|\Bar{X}-\mu|>\sqrt{\frac{2W^2\log\paren{\frac{2}{\delta}}}{n}}}&\leq \delta\\
  \Pr{|\Bar{X}-\mu|\leq \sqrt{\frac{2W^2\log\paren{\frac{2}{\delta}}}{n}}}&\geq 1- \delta\\
  \Pr{|{\VFunc}^{\policy^k}\paren{\state_t}-\widehat{\VFunc}^{\policy^k}\paren{\state_t}|\leq \sqrt{\frac{2\horizon^2\log{\frac{2}{\delta}}}{\ENum_k\paren{\state_t}}}}&\stackrel{\paren{a}}{\geq} 1- \delta,\\
\end{align*}

where (a) follows by replacing $\Bar{X}=\widehat{\VFunc}^{\policy^k}\paren{\state_t}, \mu={\VFunc}^{\policy^k}\paren{\state_t},W=\horizon, n=\ENum_k\paren{\state_t}$.

So, with probability at least $1-\delta$, the following holds
\[
\lvert {\VFunc}^{\policy^k}\paren{\state_t}-\widehat{\VFunc}^{\policy^k}\paren{\state_t}\rvert \leq \sqrt{\frac{2\horizon^2\log{\frac{2}{\delta}}}{\ENum_k\paren{\state_t}}}.
\]

\end{proof}

\begin{proof}[Proof of \thmref{thm:ase:threshold}]
From \thmref{thm:APS-sample-complexity}, we have that for any state $\state$, the optimal policy is identified with probability at least $1-\delta$ if
\begin{align}\label{eq:nsbound}
    \ENum\paren{\state} = \mathcal{O}\left(K + \left(\sum_i \frac{\horizon^2}{\Delta_i^2}\right) \log\left(\frac{K}{\delta}\right)\right).
\end{align}
This means that, for state $\state$, after $\ENum\paren{\state} = \mathcal{O}\left(K + \left(\sum_i \frac{\horizon^2}{\Delta_i^2}\right) \log\left(\frac{K}{\delta}\right)\right)$ rounds state visitation, $\Func^{\textrm{max}}\paren{\state}$ will no long pick a suboptimal oracle with probability $1-\delta$.
Thus, any increment on $\ENum\paren{\state}$ will contribute to estimating the value of the best oracle $k_{\star}$. It will reduce the variance of the estimated value of oracle $k_{\star}$, which will 
lead to a decrease of the variance term $v$ in Equation~\ref{eq:regret-ub}.

{Now we would like to connect $\ENum\paren{\state}$ with threshold $\threshold$.} By \lemref{lem:explore_bonus} we know that we will require $\ENum_k\paren{\state_t}=\frac{2\horizon^2\log\frac{2}{\delta}}{\Gamma^2}$ samples to achieve uncertainty $\Gamma$ on state $\state$. Combining this result with Equation~\eqref{eq:nsbound}, we know that by setting
\begin{align*}
 \frac{2\horizon^2\log\frac{2}{\delta}}{\Gamma^2} = \mathcal{O}\paren{K + \left(\sum_i \frac{\horizon^2}{\Delta_i^2}\right) \log\left(\frac{K}{\delta}\right)}%
\end{align*}
and consequently, 
\begin{align*}
    \Gamma = o\paren{\sqrt{\frac{{2\horizon^2\log\frac{2}{\delta}}}{{K + \left(\sum_i \frac{\horizon^2}{\Delta_i^2}\right) \log\left(\frac{K}{\delta}\right)}}}},
\end{align*}
we can ensure selecting $k_{\star}$ with probability $1-2\delta$ (by the union bound). 

This is equivalent to stating that, with probability $1-\delta$, \lopsase identifies $k_{\star}$ in state $s$ with threshold
\begin{align}
    \Gamma = o\paren{\sqrt{\frac{{2\horizon^2\log\frac{4}{\delta}}}{{K + \left(\sum_i \frac{\horizon^2}{\Delta_i^2}\right) \log\left(\frac{2K}{\delta}\right)}}}},
\end{align}
hence completing the proof.
\end{proof}

\section{Experiments}

\subsection{Baselines}\label{app:experiment:baselines}
\paragraph{PPO}
PPO uses a trust region optimization approach to update the policy. The trust region improves the stability of the learning process by ensuring the new policy is similar to the old policy. PPO also uses a clipped objective function for stability which limits the change in the policy at each iteration. %

\paragraph{AggreVaTeD}
AggreVaTeD is a differentiable version of AggreVaTe which focuses on a single oracle scenario. AggreVaTeD allows us to train policies with efficient gradient update procedures. AggreVaTeD uses a deep neural network to model the policy and use differentiable imitation learning to train it. By applying differentiable imitation learning, it minimize the difference between the expert's demonstration and the learner policy behavior. AggreVaTeD learn from the expert's demonstration while interact with the environment to outperform the expert.

\paragraph{PPO-GAE} GAE proposed a method to solving high-dimensional continuous control problems using reinforcement learning. GAE is based on a technique, \textit{generalized advantage estimation (GAE)}. GAE is used to estimate the advantage function for updating the policy. 
The advantage function measures how much better a particular action is compared to the average action. Estimating the advantage function with accuracy in high-dimensional continuous control problems is challenging. In this work, we propose PPO-GAE, which combines PPO's policy gradient method with GAE's advantage function estimate, which is based on a linear combination of value function estimates. By combining the advantage of PPO and GAE, PPO-GAE achieved both sample efficiency and stability in high-dimensional continuous control problems.

\paragraph{MAMBA}
MAMBA is the SOTA work of learning from multiple oracles. It utilizes a mixture of imitation learning and reinforcement learning to learn a policy that is able to imitate the behavior of multiple experts.  MAMBA is also considered as interactive imitation learning algorithm, it imitate the expert and interact with environment to improve the performance. MAMBA randomly select the state as switch point between learner policy and oracle. Then, it randomly select oracle to roll out. It effectively combines the strengths of multiple experts and able to handle the case of conflicting expert demonstrations.

\subsection{DeepMind Control Suite Environments}
We conduct an evaluation of \algname, comparing its performance to the mentioned baselines, on the Cheetah-run, CartPole-swingup, Pendulum-swingup, and Walker-walk tasks sourced (see \figref{fig:exp:env}) from the DeepMind Control Suite~\citep{tassa2018deepmind}.

\begin{figure}
    \centering
    \begin{subfigure}{.22\textwidth}
        \centering
        \includegraphics[width=\textwidth]{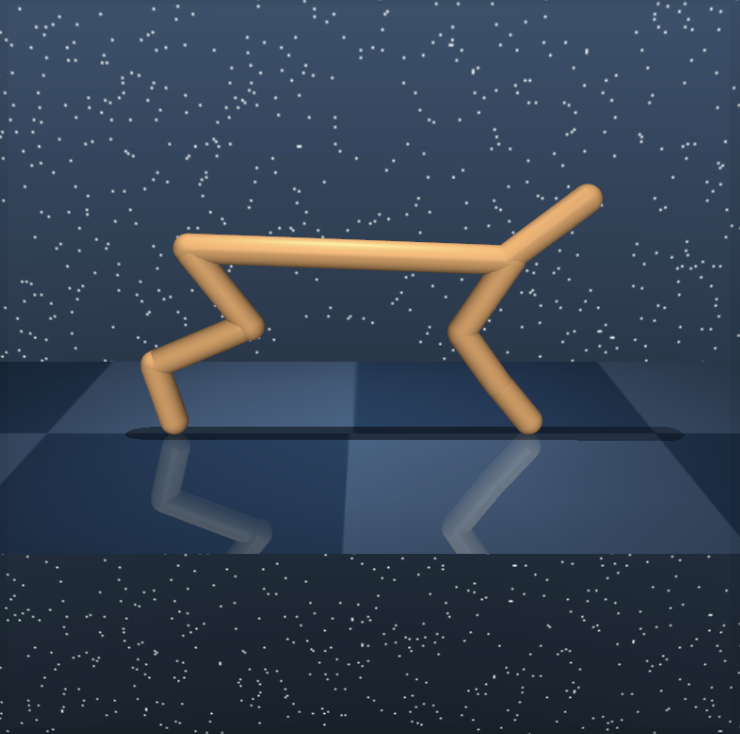}
        \caption{Cheetah-run}
    \end{subfigure}
    \begin{subfigure}{.22\textwidth}
        \centering
        \includegraphics[width=\textwidth]{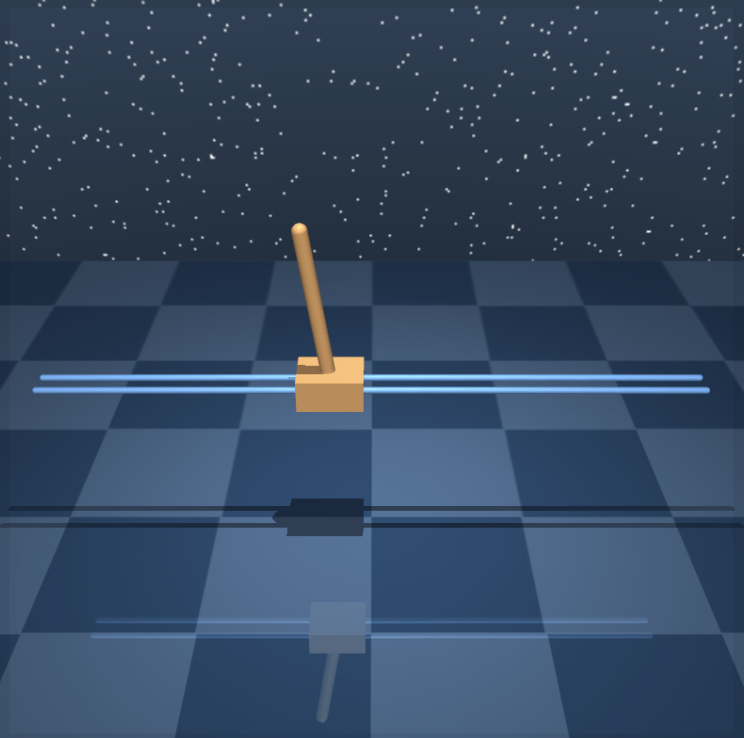}
        \caption{Carpole-swingup}
    \end{subfigure}
    \begin{subfigure}{.22\textwidth}
        \centering
        \includegraphics[width=\textwidth]{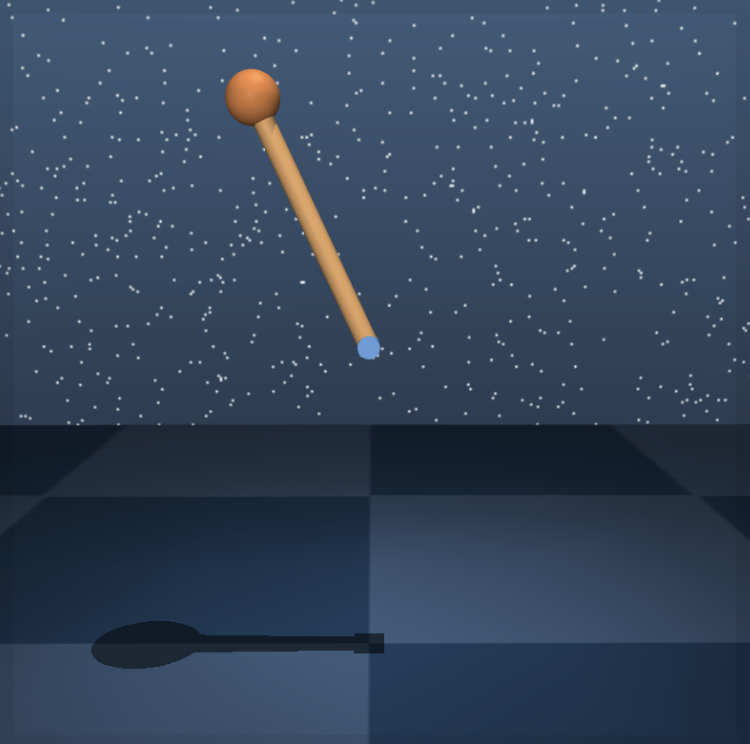}
        \caption{Pendulum-swingup}
    \end{subfigure}
    \begin{subfigure}{.216\textwidth}
        \centering
        \includegraphics[width=\textwidth]{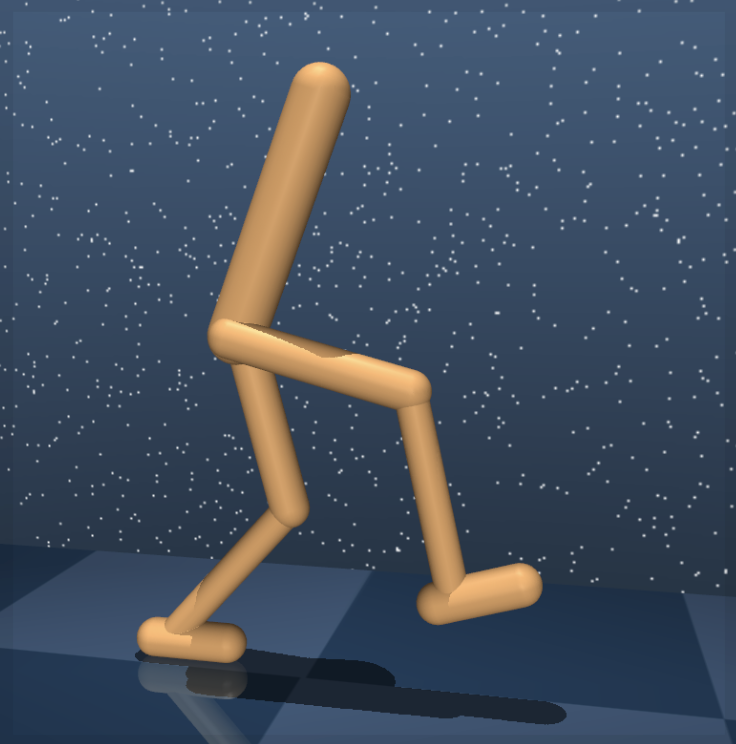}
        \caption{Walker-walk}
    \end{subfigure}
    \caption{A visual illustration of the tasks we adopted for our experiments.
    }\label{fig:exp:env}
\end{figure}

\subsection{Implementation Details of \lopsase}\label{app:algdetails}

We provide the details of $\algname$ in \fix{\algoref{alg:lops}} as \algoref{alg:lops:implement}. \algoref{alg:lops:implement} closely follows \fix{\algoref{alg:lops}} with a few modifications as follows:

\begin{itemize}
    \item In line \ref{line:buffer}, we have a buffer with a fixed size ($\|\mathcal{D}_n\| = 19,200$) for each oracle, and we discard the oldest data when it fills up.
    \item In line \ref{line:rollout}, we roll-out the learner policy until a buffer with a fixed size ($\|\mathcal{D}_n'\|$ = 2,048) fills up, and empty it once we use them to update the learner policy. This stabilizes the training compared to storing a fixed number of trajectories to the buffer, as MAMBA does.
    \item In line \ref{line:update}, we adopted PPO style policy update.
\end{itemize}

\begin{algorithm}[!t]
    \caption{\textbf{M}ax-aggregation \textcolor{Maroon}{\textbf{A}ctive \textbf{P}olicy \textbf{S}election} with \textcolor{NavyBlue}{Active \textbf{S}tate \textbf{E}xploration} (M\textcolor{Maroon}{APS}-\textcolor{NavyBlue}{SE})}\label{alg:lops:implement}
    \label{alg:lops:implement2}
    \begin{algorithmic}[1] 
        \Require {Initial learner policy $\policy_{1}$}, oracle policies $\{\policy^{k}\}_{k\in \bracket{\ONum}}$, initial value functions $\{\hat{V}^k\}_{k\in\bracket{\ONum}}$
        \For{$n=1,2, \ldots, N-1$} 
        \If{\textcolor{NavyBlue}{SE is $\textsc{True}$}}
            \IndentLineComment{/* active state exploration */}
            \State \textcolor{NavyBlue}{Roll-in learner policy $\policy_n$ until $\Gamma_{k_\star}\paren{\state_t}$ ${\geq} \threshold$, where $k_{\star}$ and $\Gamma_{k_\star}\paren{\state_{t}}$ are computed via Equation~\ref{eq:kstar}  and \ref{eq:bonus} at each visited state $s_t$.}
        \Else
            \State Roll-in learner policy $\policy_n$ up to $t_e\sim \text{Uniform}\bracket{\horizon-1}$
        \EndIf
        \IndentLineComment{/* active policy selection */}
        \State \textcolor{Maroon}{{Select $k_{\star}$ via Equation~\ref{eq:kstar}}}.
\State {Switch to $\policy^{k_{\star}}$ to roll-out and collect data $\mathcal{D}_n$}. We have a buffer with a fixed size ($\|\mathcal{D}_n\| = 19,200$) for each oracle, and we discard the oldest data when it fills up.\label{line:buffer}
\State Update the estimate of $\widehat{V}^{k_{\star}}(\cdot)$ with $\mathcal{D}_n$.
\State Roll-out the learner policy until a buffer with a fixed size ($\|\mathcal{D}_n'\|$ = 2,048) fills up, and empty it once we use them to update the learner policy. This stabilizes the training compared to storing a fixed number of trajectories to the buffer \label{line:rollout}
\State Compute gradient estimator $g_n$ of $\nabla \widehat{\ell}_n(\policy_n, \lambda)$ in \eqref{eq:gradient} using $\mathcal{D}_n'$.
\State Perform PPO style policy update on policy $\policy_n$ to $\policy_{n+1}$.\label{line:update}
        
        \EndFor 
    \end{algorithmic}
\end{algorithm}

\subsection{Computing Infrastructure}
 We performed our experiments on a cluster that includes CPU nodes (about 280 cores) and GPU nodes, about 110 Nvidia GPUs, ranging from Titan X to A6000, set up mostly in 4- and 8-GPU nodes.

\end{document}